\def\Var{\mathrm{Var}}
\def\bbR{\mathbb{R}}
\def\dx{\mathrm{d}x}
\def\eqdef{:=}
\def\bbR{\mathbb{R}}
\DeclareMathOperator{\sigm}{\mathrm{sigm}}
\DeclareMathOperator{\relu}{\mathrm{relu}}
\DeclareMathOperator{\elu}{\mathrm{elu}}
\DeclareMathOperator{\softplus}{\mathrm{softplus}}
\newlength{\myfigwidth}
\newtheorem{theorem}{Theorem}
\newtheorem{lemma}[theorem]{Lemma}
\newtheorem{corollary}[theorem]{Corollary}
\newtheorem{proposition}[theorem]{Proposition}
\title{$q$-Neurons: Neuron Activations based on Stochastic Jackson's Derivative Operators}
\author[$\star$]{Frank Nielsen}
\affil[$\star$]{Sony Computer Science Laboratories, Inc.}
\affil[$\star$]{Japan}
\affil[$\star$]{{\small\tt Frank.Nielsen@acm.org}}
\author[$\dagger$]{Ke Sun}
\affil[$\dagger$]{Data61}
\affil[$\dagger$]{Australia}
\affil[$\dagger$]{{\small\tt Ke.Sun@data61.csiro.au}}
\date{}
\begin{document}

\maketitle

\begin{abstract}
We propose a new generic type of stochastic neurons, called $q$-neurons, that considers activation functions based on Jackson's $q$-derivatives, with stochastic parameters $q$.
Our generalization of neural network architectures with $q$-neurons is shown to be both scalable and very easy to implement.
We demonstrate experimentally consistently improved performances over state-of-the-art standard activation functions, both on training and testing loss functions.
\end{abstract}

\section{Introduction}

The vanilla method to train a {\em Deep Neural Network} (DNN) is to use the {\em Stochastic Gradient Descent} (SGD) method
(a first-order local optimization technique). The gradient of the DNN loss function, represented as a directed
computational graph, is calculated using the efficient backpropagation algorithm relying on the chain rule of
derivatives (a particular case of automatic differentiation).

The ordinary derivative calculus can be encompassed into a more general {\em $q$-calculus}~\cite{Jackson-1909,qbook-2001} by defining the Jackson's {\em $q$-derivative} (and gradient) as follows:
\begin{equation}
D_{q} f(x) \eqdef    \frac{f(x)-f(qx)}{(1-q)x},\quad q\neq1, x\neq0.
\end{equation}

	The $q$-calculus generalizes the ordinary Leibniz gradient (obtained as a limit case when $q\rightarrow 1$ or when $x\rightarrow 0$) but does  {\em not} enjoy a generic chain rule property.
It can  further be extended to the {\em $(p,q)$-derivative}~\cite{pq-2013,khan18} defined as follows:
\begin{equation}
D_{p,q} f(x) \eqdef  \frac{f(px)-f(qx)}{(p-q)x}=D_{q,p} f(x) ,\quad p\neq{q}, x\neq0.
\end{equation}
which encompasses the $q$-gradient as $D_{1,q}f(x) = D_{q,1}f(x) = D_{q}f(x)$.

The two main advantages of $q$-calculus are
\begin{enumerate}
\item To bypass the calculations of limits, and
\item To consider $q$ as a stochastic parameter.
\end{enumerate}

We refer to the textbook~\cite{qbook-2001} for an in-depth explanation of $q$-calculus.
Appendix~\ref{sec:pqgradient} recalls the basic rules and properties of the generic
$(p,q)$-calculus~\cite{khan18} that further generalizes the $q$-calculus.

To the best of our knowledge, the {\em $q$-derivative operators} have seldom been considered in the machine learning community~\cite{xn-2018}.
We refer to~\cite{qGradOpt-2016} for some encouraging preliminary experimental optimization results on global optimization tasks.
 
In this paper, we introduce a meta-family of neuron activation functions based on standard activation functions (e.g., sigmoid, softplus, ReLU, ELU).
We refer to them as {\em $q$-activations}.
The $q$-activation is a {\em stochastic
activation function} built  on top of any given activation function $f$.
$q$-Activation is very easy to implement based on state-of-the-art Auto-Differentiation (AD)
frameworks while consistently producing better performance. Based on our
experiments, one should almost always use $q$-activation instead of its
deterministic counterpart. 
In the remainder, we define {\em $q$-neurons} as stochastic neurons equipped with $q$-activations.

Our main contributions are summarized as follows:
\begin{itemize}
\item The generic $q$-activation and an analysis of its basic properties.
\item An empirical study that demonstrates that the $q$-activation can reduce both training and testing errors.
\item A novel connection and sound application of (stochastic) $q$-calculus in machine learning.
\end{itemize}

\section{Neurons with $q$-activation functions}

Given {\em any} activation function $f:\,\bbR\to\bbR$, we construct its corresponding
``quantum'' version, also called \emph{$q$-activation function}, as
\begin{equation}\label{eq:qneuron}
g_q(x) \eqdef \frac{f(x)-f(qx)}{1-q} = \left(D_qf(x)\right) x,
\end{equation}
where $q$ is a real-valued random variable.
To see the relationship between $g_q(x)$ and $f(x)$, let us observe that we have the following asymptotic properties:

\begin{proposition}\label{thm:limit}
Assume $f(x)$ is smooth and the expectation of $q$ is $E(q)=1$. Then $\forall{x}$, we have
\begin{align}
\lim_{\mathrm{Var}(q)\to0}  g_q(x) &=
f'(x) x.\nonumber\\
\lim_{\mathrm{Var}(q)\to0} g'_q(x) &=
f'(x) + f''(x) x,\nonumber
\end{align}
where $E(\cdot)$ denotes the expectation, and $\mathrm{Var}(\cdot)$ denotes the variance.
\end{proposition}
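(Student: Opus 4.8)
The plan is to analyze the limit as $\Var(q)\to 0$ under the constraint $E(q)=1$. Intuitively, as the variance shrinks to zero while the mean stays fixed at $1$, the random variable $q$ concentrates at $q=1$, so $g_q(x)$ should approach its deterministic limiting value at $q=1$. The cleanest route is to write $q = 1 + \epsilon$, where $\epsilon$ is a random variable with $E(\epsilon)=0$ and $\Var(\epsilon)=\Var(q)\to 0$.

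First I would establish the first limit. Substituting $q=1+\epsilon$ into \eqref{eq:qneuron} gives
\begin{equation}
g_q(x) = \frac{f(x)-f((1+\epsilon)x)}{-\epsilon} = \frac{f((1+\epsilon)x)-f(x)}{\epsilon}.\nonumber
\end{equation}
For fixed $x$ and smooth $f$, a Taylor expansion in $\epsilon$ yields $f((1+\epsilon)x) = f(x) + f'(x)\,x\,\epsilon + \tfrac{1}{2}f''(x)\,x^2\,\epsilon^2 + o(\epsilon^2)$, so that $g_q(x) = f'(x)x + \tfrac12 f''(x)x^2 \epsilon + o(\epsilon)$. The next step is to interpret the claimed limit: since $g_q(x)$ is itself a random variable, the statement $\lim_{\Var(q)\to 0} g_q(x) = f'(x)x$ is most naturally read as convergence of the expectation. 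Taking expectations and using $E(\epsilon)=0$ kills the linear term, leaving $E(g_q(x)) = f'(x)x + O(\Var(q))$, which gives the first limit. (Alternatively, one argues $L^2$ or in-probability convergence using $E(\epsilon^2)=\Var(q)\to 0$ to control the remainder.)

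For the second limit I would first compute $g_q'(x)$ by differentiating \eqref{eq:qneuron} in $x$, treating $q$ as a parameter:
\begin{equation}
g_q'(x) = \frac{f'(x)-q\,f'(qx)}{1-q}.\nonumber
\end{equation}
Again substituting $q=1+\epsilon$ and Taylor-expanding $f'((1+\epsilon)x) = f'(x) + f''(x)x\,\epsilon + o(\epsilon)$, the numerator becomes $f'(x) - (1+\epsilon)\big(f'(x) + f''(x)x\,\epsilon + o(\epsilon)\big) = -\epsilon\big(f'(x)+f''(x)x\big) + o(\epsilon)$, and dividing by $1-q = -\epsilon$ gives $g_q'(x) = f'(x) + f''(x)x + o(1)$. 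Taking expectations (or the appropriate stochastic limit) then yields the second claim.

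The main obstacle is conceptual rather than computational: the proposition asserts a deterministic limit for a quantity $g_q(x)$ that is random, so the central task is to pin down the correct mode of convergence and justify passing the limit through the remainder terms. In particular, one must ensure the Taylor remainder is controlled uniformly enough (for fixed $x$) that its expectation is genuinely $O(\Var(q))$ and vanishes; this requires either a boundedness assumption on the relevant derivatives of $f$ or a restriction on the support of $q$ as $\Var(q)\to 0$. I would state explicitly which convergence notion is meant and add whatever mild regularity hypothesis on $f$ makes the remainder estimate rigorous, since the bare smoothness assumption may not by itself guarantee that the higher-order terms vanish in expectation for an arbitrary distribution of $q$.
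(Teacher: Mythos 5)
Your proposal is correct, and computationally it is the same argument as the paper's: the paper writes $g_q(x)=\frac{f(x)-f(qx)}{x-qx}x$ and replaces $\lim_{\Var(q)\to 0}$ by $\lim_{q\to 1}$, recognizing difference quotients for $f'(x)$ and (after the add-and-subtract of $qf'(x)$ in the numerator of $g_q'(x)=\frac{f'(x)-qf'(qx)}{1-q}$) for $f''(x)$; your substitution $q=1+\epsilon$ with Taylor expansion is algebraically the same reduction. Where you genuinely depart from the paper is in confronting the issue it silently elides: $g_q(x)$ is a random variable, and the paper's proof simply identifies ``$\Var(q)\to 0$ with $E(q)=1$'' with the deterministic limit $q\to 1$, offering no mode of convergence. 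Your discussion is the more honest treatment, but note the trade-off between your two proposed readings. The convergence-in-expectation route has exactly the gap you flag: the pointwise remainder $o(\epsilon)$ need not integrate to $o(1)$ (indeed $E\,g_q(x)$ may not even exist for smooth $f$ with fast-growing derivatives and heavy-tailed $q$), so that reading requires added hypotheses (bounded relevant derivatives, or shrinking support of $q$ -- the latter in fact holds for the paper's choice in \cref{eq:q} as $\lambda\to 0$, up to the fixed offset $\phi$). The convergence-in-probability reading, by contrast, needs nothing extra: Chebyshev gives $q\to 1$ in probability, the map $q\mapsto g_q(x)$ extends continuously to $q=1$ with value $f'(x)x$ since $f$ is smooth, and the continuous mapping theorem concludes. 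That in-probability version is the rigorous statement closest to what the paper actually proves, and stating it explicitly would strengthen the proposition at no cost; your write-up gets there provided you commit to that mode rather than the expectation one.
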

\begin{proof}
\begin{equation*}
\lim_{\mathrm{Var}(q)\to0}  g_q(x)
= \lim_{\mathrm{Var}(q)\to0}  \frac{f(x)-f(qx)}{x-qx} x
= \lim_{qx\to{x}}  \frac{f(x)-f(qx)}{x-qx} x
= f'(x)x.
\end{equation*}
\begin{align*}
&\lim_{\mathrm{Var}(q)\to0} g'_q(x)
= \lim_{\mathrm{Var}(q)\to0} \frac{f'(x)-qf'(qx)}{1-q}
= \lim_{\mathrm{Var}(q)\to0} \frac{f'(x)-q f'(x) + qf'(x) - qf'(qx)}{1-q}\\
&=f'(x) + \lim_{\mathrm{Var}(q)\to0} \frac{qf'(x) - qf'(qx)}{1-q}
= f'(x) + \lim_{q\to1} \frac{f'(x) - f'(qx)}{x-qx} qx
= f'(x) + f''(x) x.
\end{align*}
\end{proof}
Notice that as $\Var(q)\to 0$, the limit of $g_q(x)$ is {\em not} $f(x)$ but $f'(x)x$.
Thus informally speaking, the gradient of $g_q(x)$ carries {\em second-order information} of $f(x)$.

We further have the following property:

\begin{proposition}\label{thm:dpq}
We have for $p,q\not =1$:
\begin{align}\label{eq:dpq}
D_p(g_q(x)) = \frac{1}{1-p} D_q f(x) - \frac{p}{1-p} D_{p,pq} f(x).
\end{align}
\end{proposition}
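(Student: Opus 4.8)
The plan is to prove the identity by direct computation, unfolding each operator according to its definition and then regrouping the four function evaluations $f(x)$, $f(qx)$, $f(px)$, $f(pqx)$ so that they reassemble into the $q$-derivative and the $(p,pq)$-derivative on the right-hand side. Since $q$-calculus lacks a chain rule, there is no shortcut here: everything must come from the difference quotients themselves, which is actually convenient because the statement is then just an algebraic rearrangement.

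First I would write $D_p(g_q(x)) = \dfrac{g_q(x)-g_q(px)}{(1-p)x}$ and substitute $g_q(x)=\dfrac{f(x)-f(qx)}{1-q}$. Evaluating $g_q$ at $px$ gives $g_q(px)=\dfrac{f(px)-f(qpx)}{1-q}=\dfrac{f(px)-f(pqx)}{1-q}$, using commutativity of the scalar multiplication $q(px)=p(qx)=pqx$. This produces
\begin{equation*}
D_p(g_q(x)) = \frac{1}{(1-p)(1-q)x}\Bigl[\bigl(f(x)-f(qx)\bigr)-\bigl(f(px)-f(pqx)\bigr)\Bigr].
\end{equation*}

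Next I would identify the two bracketed differences with scaled copies of the target operators. By definition $f(x)-f(qx)=(1-q)x\,D_qf(x)$. For the second difference, the key observation is that $f(px)-f(pqx)$ is an increment between the arguments $px$ and $pqx$, so it matches the $(p,pq)$-derivative: since the prefactor in $D_{p,pq}f(x)=\dfrac{f(px)-f(pqx)}{(p-pq)x}$ has $p-pq=p(1-q)$, we get $f(px)-f(pqx)=p(1-q)x\,D_{p,pq}f(x)$. Substituting both expressions, the common factor $(1-q)x$ cancels against the denominator, leaving exactly $\dfrac{1}{1-p}D_qf(x)-\dfrac{p}{1-p}D_{p,pq}f(x)$.

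The only genuinely delicate step is the bookkeeping in the second identification: one must correctly read off that the relevant operator is $D_{p,pq}$ (not, say, $D_{p,q}$) and that its normalizing factor is $p(1-q)x$ rather than $(p-q)x$, because the two arguments are $px$ and $pqx$. Everything else is routine cancellation, and no limits or smoothness assumptions are needed since this is a purely finite-difference identity valid for $p,q\neq 1$ and $x\neq 0$.
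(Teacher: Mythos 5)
Your proof is correct and takes essentially the same route as the paper's: a direct unfolding of the definitions followed by the key identification $f(px)-f(pqx)=p(1-q)x\,D_{p,pq}f(x)$, which is precisely the paper's observation that $D_qf(px)=D_{p,pq}f(x)$. The only cosmetic difference is that the paper works from the form $g_q(x)=(D_qf(x))\,x$ and splits into $\tfrac{1}{1-p}D_qf(x)-\tfrac{p}{1-p}D_qf(px)$ before converting the second term, whereas you expand all four function values at once and regroup.
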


\begin{proof}
\begin{align}
D_p(g_q(x))
&=
\frac{g_q(x)-g_q(px)}{(1-p)x}
=
\frac{ \left(D_qf(x)\right) x - \left(D_qf(px)\right)px}{(1-p)x}\nonumber\\
&=
\frac{1}{1-p}D_qf(x) -\frac{p}{1-p}D_qf(px).
\end{align}
Since $D_q f(px)=\frac{f(px)-f(pqx)}{px-pqx}=
\frac{f(px)-f(pqx)}{(p-pq)x}=D_{p,pq} f(x)$,
\cref{eq:dpq} is straightforward.
\end{proof}
By \cref{thm:dpq}, the $p$-derivative of 
the $q$-activation $g_q(x)$ agrees with the original activation function $f$.

See \cref{tbl:act} for a list of activation functions with their corresponding functions $f'(x)x$,
where 
$$
\sigm(x)=1/(1+\exp(-x)),
$$ 
is the {\em sigmoid function},
$$
\softplus(x)=\log(1+\exp(x)),
$$
is the {\em softplus function},
\begin{equation*}
\relu(x)=\left\{
\begin{array}{ll}
x & \text{ if }x\ge0\\
0 & \text{otherwise}
\end{array}
\right.
\end{equation*}
is the {\em Rectified Linear Unit} (ReLU)~\cite{maas13},
and
\begin{equation*}
\elu(x)=\left\{
\begin{array}{ll}
x & \text{ if }x\ge0\\
\alpha(\exp(x)-1) & \text{otherwise}
\end{array}
\right.
\end{equation*}
denotes the {\em Exponential Linear Unit} (ELU)~\cite{clevert15}.

\begin{table*}
\centering
\caption{Common activation functions $f(x)$ with their corresponding limit cases $\lim_{\mathrm{Var}(q)\to0}  g_q(x)=f'(x)x$.}\label{tbl:act}
\vspace{.5em}
\begin{tabular}{|l||c|c|c|c|c|}\hline
$f(x)$   & $\sigm(x)$ & $\tanh(x)$ & $\relu(x)$ & $\softplus(x)$& $\elu(x)$ \\ \hline
$f'(x)x$ & $\sigm(x)(1-\sigm(x))x$ & $\mathrm{sech}^2(x)x$ & $\mathrm{relu}(x)$ & $\sigm(x)x$ & $\left\{ \begin{array}{ll}
               x & x\ge0\\
\alpha \exp(x) x & x<0 \end{array}\right.$\\ \hline
\end{tabular}
\end{table*}

A common choice for the random variable $q$ that is used in our experiments is
\begin{equation}\label{eq:q}
q = 1 + \left(2[\epsilon\ge0]-1\right) \left(\lambda \vert\epsilon\vert + \phi\right),
\end{equation}
where $\epsilon\sim N(0,1)$ follows the standard Gaussian distribution, 
$[\cdot]$ denotes the Iverson bracket  (meaning $1$ if the proposition is satisfied, and $0$ otherwise),
$\lambda>0$ is a scale parameter of $q$, and $\phi=10^{-3}$
is the smallest absolute value of $q$ so as to avoid division by zero.
See \cref{fig:q} for the density function plots of $q$ defined on $(-\infty,-\phi]\cup[\phi,\infty]$

\begin{figure} 
\centering
\includegraphics[width=.9\textwidth]{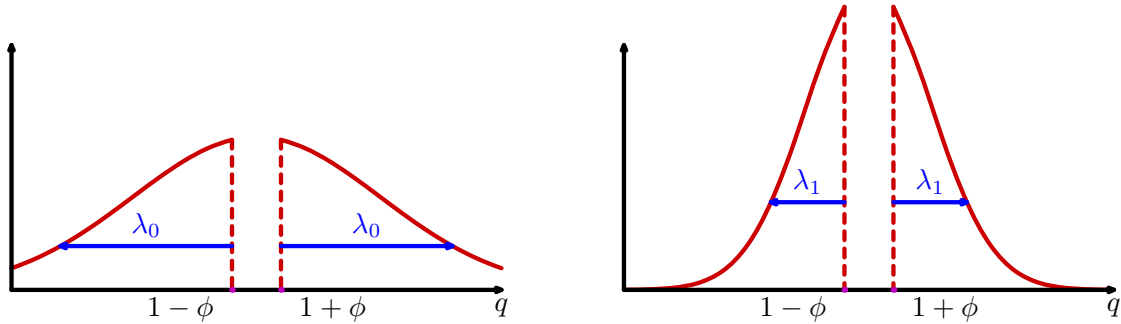}
\caption{The probability density function of stochastic variable $q$ (zero-avoiding) used when calculating $q$-derivatives.}\label{fig:q}
\end{figure}
\begin{figure*}[!ht]
\centering
\begin{subfigure}[b]{.9\textwidth}
\includegraphics[width=\textwidth]{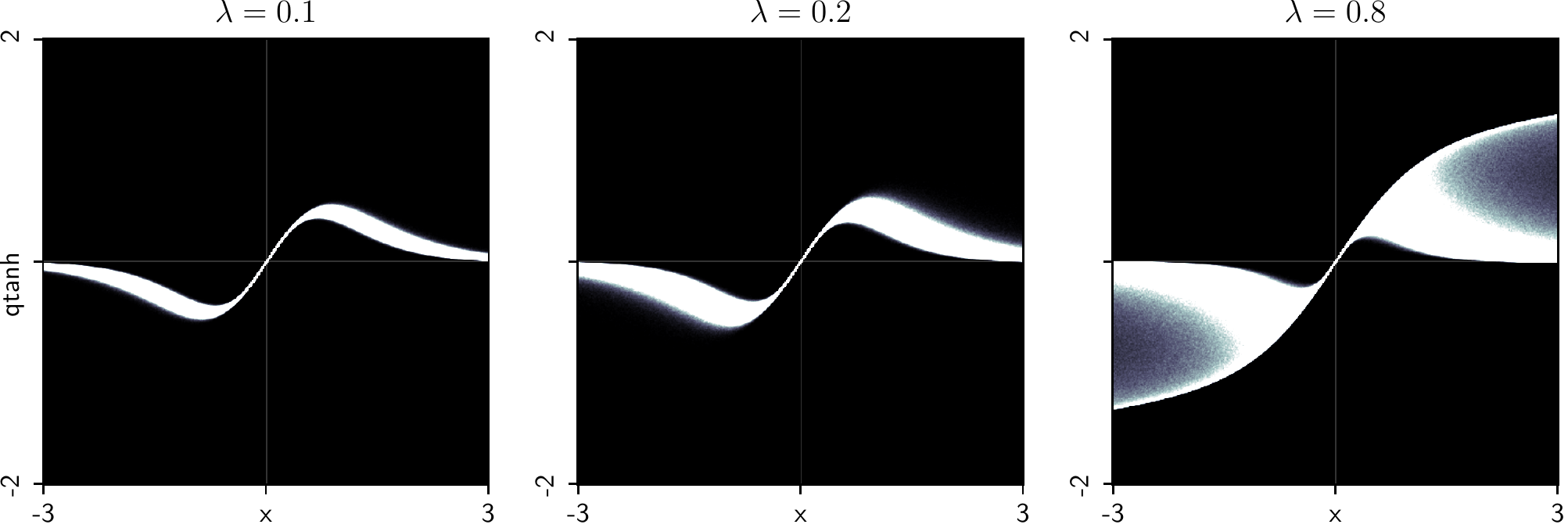}
\end{subfigure}
\begin{subfigure}[b]{.9\textwidth}
\includegraphics[width=\textwidth]{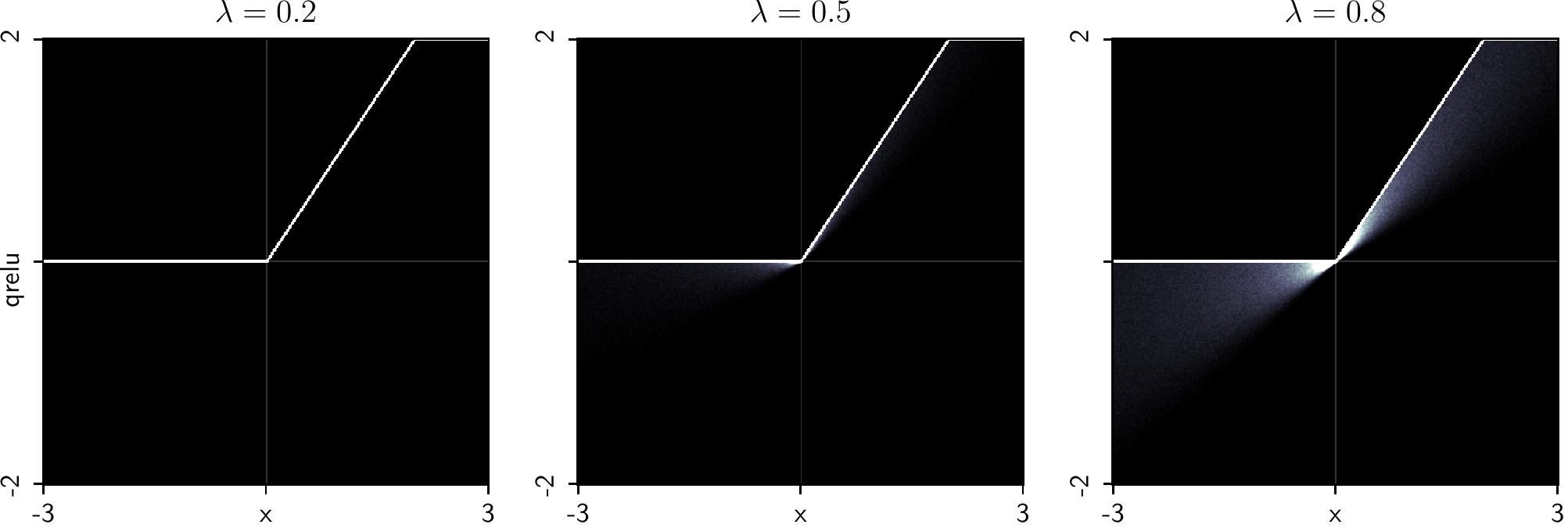}
\end{subfigure}
\begin{subfigure}[b]{.9\textwidth}
\includegraphics[width=\textwidth]{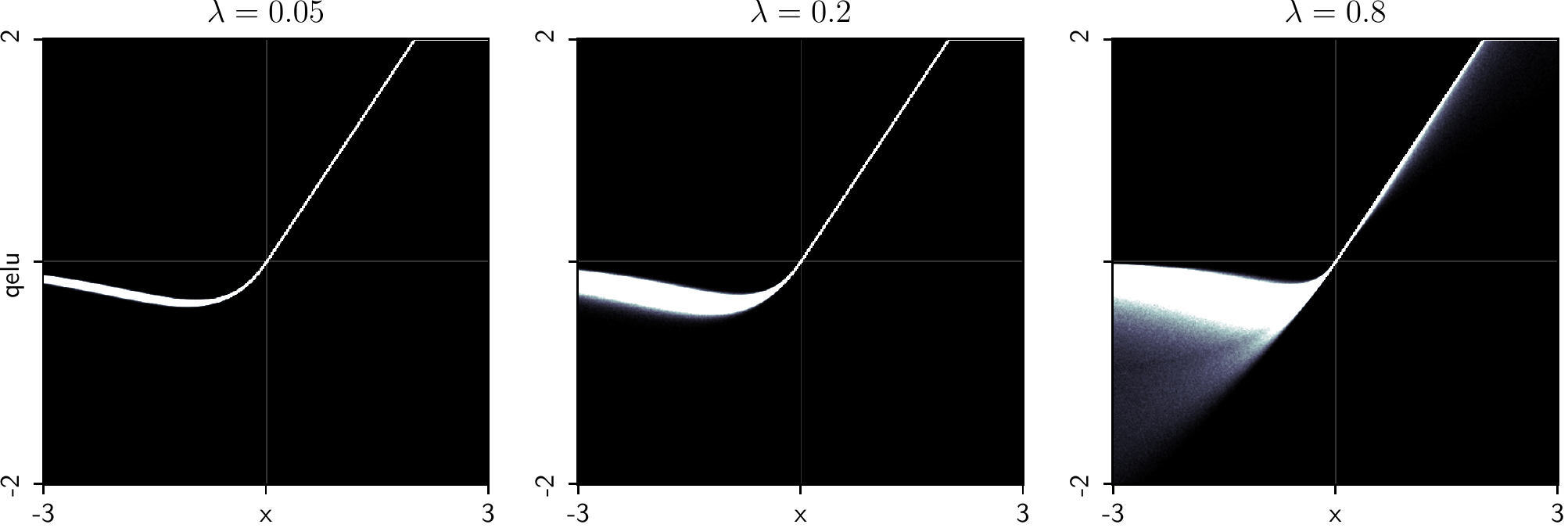}
\end{subfigure}
\begin{subfigure}[b]{.9\textwidth}
\includegraphics[width=\textwidth]{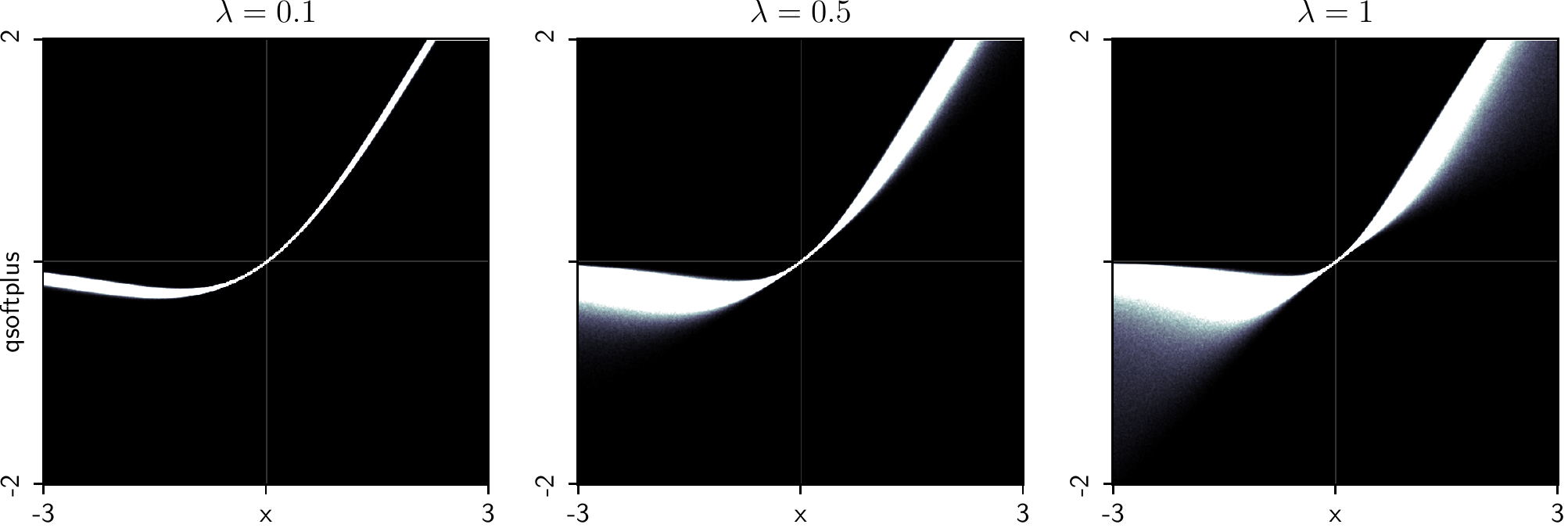}
\end{subfigure}
\caption{The density function of $q$-neurons with 
$q$ sampled according to \cref{eq:q} for different
values of $\lambda$.
The activation is roughly a deterministic function $f'(x)x$ for small $\lambda$ as shown in \cref{tbl:act}.
The activation is random for large $\lambda$.
Darker color indicates higher probability density.}
\end{figure*}

To implement $q$-neurons, one only need to tune the hyper-parameter $\lambda$.
It can either be fixed to a small value, e.g. 0.02 or 0.05 during learning,
or be annealed from an initial value $\lambda_0$.
Such an annealing scheme can be set to
\begin{equation}\label{eq:anneal}
\lambda = \frac{\lambda_0}{1+\gamma (T-1)},
\end{equation}
where $T=1,2,\cdots$ is the index of the current epoch, and $\gamma$ is a decaying rate parameter.
This parameter $\gamma$ can be empirically fixed based on the total number of epochs: 
For example, in our experiments we train $100$ epochs and apply $\gamma=0.5$,
so that in the final epochs $\lambda$ is a same value (around 0.02$\lambda_0$).
We will investigate both of those two cases in our experiments.

Let us stress out that deep learning architectures based on stochastic $q$-neurons are scalable
and easy to implement.  There is {\em no} additional free parameter imposed.
The computational overhead of $g_q(x)$ as compared to $f(x)$
involves sampling one Gaussian random variable, and then 
calling $f(x)$ two times and computing $g_q(x)$ according to \cref{eq:qneuron}.
In our \texttt{Python} implementation, the core implementation of $q$-neuron 
is only in three lines of codes (see~\ref{sec:pseudocode}).

Alternative approaches to inject stochasticity into neural network training
include dropout~\cite{dropout}, gradient noise~\cite{gradientnoise}, etc.
Both $q$-neuron and dropout modify the forward pass of the neural network.
In the experimental section we will investigate the {\em effect} of $q$-neurons
with or without dropout.

\section{Experiments}\label{sec:exp}

We carried experiments on classifying MNIST digits\footnote{\url{http://yann.lecun.com/exdb/mnist/}}
and CIFAR10 images\footnote{\url{https://www.cs.toronto.edu/~kriz/cifar.html}}
using Convolutional Neural Networks (CNNs) and Multi-Layer Perceptrons (MLPs).
Our purpose is not to beat state-of-the-art records but to
investigate the effect of applying $q$-neuron and its hyper-parameter sensitivity.
We summarize the neural network architectures as follows:

\begin{itemize}
\item The MNIST-CNN architecture is given as follows:
2D convolution with $3\times3$ kernel and $32$ features;
($q$-)activation;
batch normalization;
2D convolution with $3\times3$ kernel and $32$ features;
($q$-)activation;
$2\times2$ max-pooling;
batch normalization;
2D convolution with $3\times3$ kernel and $64$ features;
($q$-)activation;
batch normalization;
2D convolution with $3\times3$ kernel and $64$ features;
($q$-)activation;
$2\times2$ max-pooling;
flatten into 1D vector;
batch normalization;
dense layer of output size $512$;
($q$-)activation;
batch normalization;
(optional) dropout layer with drop probability 0.2;
dense layer of output size $10$;
soft-max activation.

\item The MNIST-MLP architecture is:
dense layer of output size $256$;
($q$-)activation;
batch normalization;
(optional) dropout layer with drop probability 0.2;
dense layer of output size $256$;
($q$-)activation;
batch normalization;
(optional) dropout layer with drop probability $0.2$;
dense layer of output size $10$;
soft-max activation.

\item The CIFAR-CNN architecture is:
2D convolution with $3\times3$ kernel and $32$ features;
($q$-)activation;
2D convolution with $3\times3$ kernel and $32$ features;
($q$-)activation;
$2\times2$ max-pooling;
(optional) dropout layer with drop probability 0.2;
2D convolution with $3\times3$ kernel and $64$ features;
($q$-)activation;
2D convolution with $3\times3$ kernel and $64$ features;
($q$-)activation;
$2\times2$ max-pooling;
(optional) dropout layer with drop probability 0.2;
flatten into 1D vector;
dense layer of output size $512$;
($q$-)activation;
(optional) dropout layer with drop probability 0.1;
dense layer of output size $10$;
soft-max activation.
\end{itemize}

We use the cross-entropy as the loss function.
The model is trained for $100$ epochs
based on a stochastic gradient descent optimizer
with a mini-batch size of $64$ (MNIST) or $32$ (CIFAR) and
a learning rate of 0.05 (MNIST) or 0.01 (CIFAR) without momentum.
The learning rate is multiplied by $(1-10^{-6})$ after each mini-batch update.
We compare $\tanh$, $\relu$, $\elu$, $\softplus$ activations with their $q$-counterparts.
We either fix $\lambda_0=0.02$ or $0.1$, or anneal from $\lambda_0\in\{1,5,9\}$ with $\gamma=0.5$.
The learning curves are shown in \cref{fig:mnistcnn,fig:mnistmlp,fig:cifarcnn},
where the training curves show the sample-average cross-entropy values evaluated
on the training set after each epoch,
and the testing curves are classification accuracy.
In all figures, each training or testing curve is an average over $10$ independent runs.

For $q$-activation, $c$ means the $\lambda$ parameter is fixed ('c'onstant); $a$ means the 
$\lambda$ is 'a'nnealed based on \cref{eq:anneal}. For example, ``$c$0.02'' means $\lambda=0.02$ throughout the training process,
while ``$a$1'' means that $\lambda$ is annealed from $\lambda_0=1$.

We see that in almost all cases, $q$-activation can consistently improve
learning, in the sense that both training and testing errors are reduced.
This implies that $q$-neurons can get to a better local optimum as compared to
the corresponding deterministic neurons.
The exception worth noting is $q$-$\relu$, which cannot improve over $\relu$ activation.
This is because $g_q(x)$ is very similar to the original $f(x)$ for (piece-wisely) linear functions.
By \cref{thm:limit}, $f''(x)=0$ implies that the gradient of 
$g_q(x)$ and $f(x)$ are similar for small $\Var(q)$.
One is advised to use $q$-neurons only with {\em curved activation
functions} such as $\elu$, $\tanh$, etc.

We also observe that the benefits of $q$-neurons are not sensitive to
hyper-parameter selection.  In almost all cases, $q$-neuron with
$\lambda$ simply fixed to 0.02/0.1 can bring better generalization
performance, while an annealing scheme can further improve the score.
Setting $\lambda$ too large may lead to under-fit.
One can benefit from $q$-neurons either with or without dropout.

On the MNIST dataset, the best performance with error rate 0.35\% (99.65\% accuracy)
is achieved by the CNN architecture with $q$-$\elu$ and $q$-$\tanh$.
On the CIFAR10 dataset, the best performance of the CNN with accuracy 82.9\%
is achieved by $q$-$\elu$.

\begin{figure*}
\centering
\begin{subfigure}[b]{\myfigwidth}
\includegraphics[width=\textwidth]{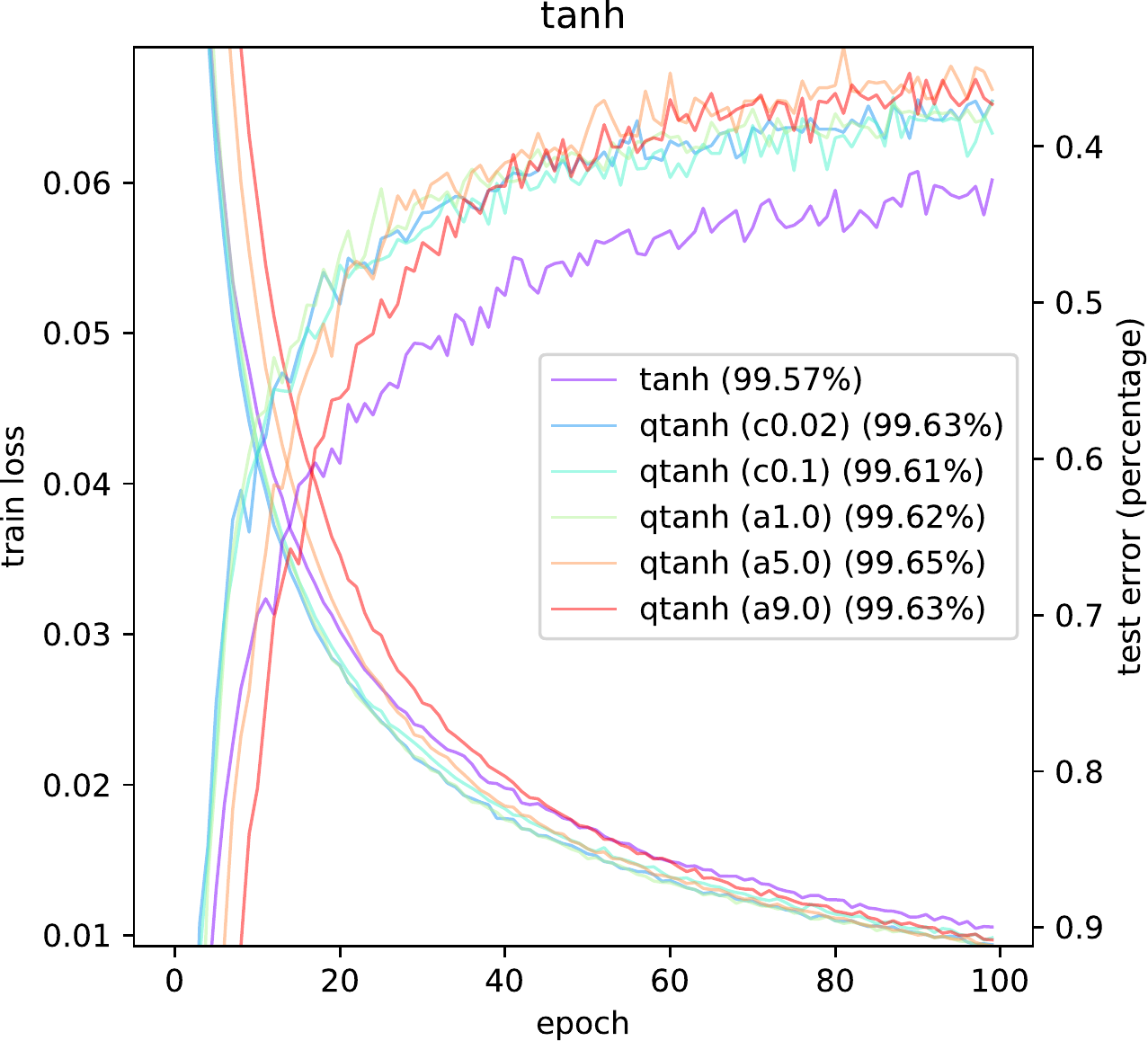}
\end{subfigure}
\begin{subfigure}[b]{\myfigwidth}
\includegraphics[width=\textwidth]{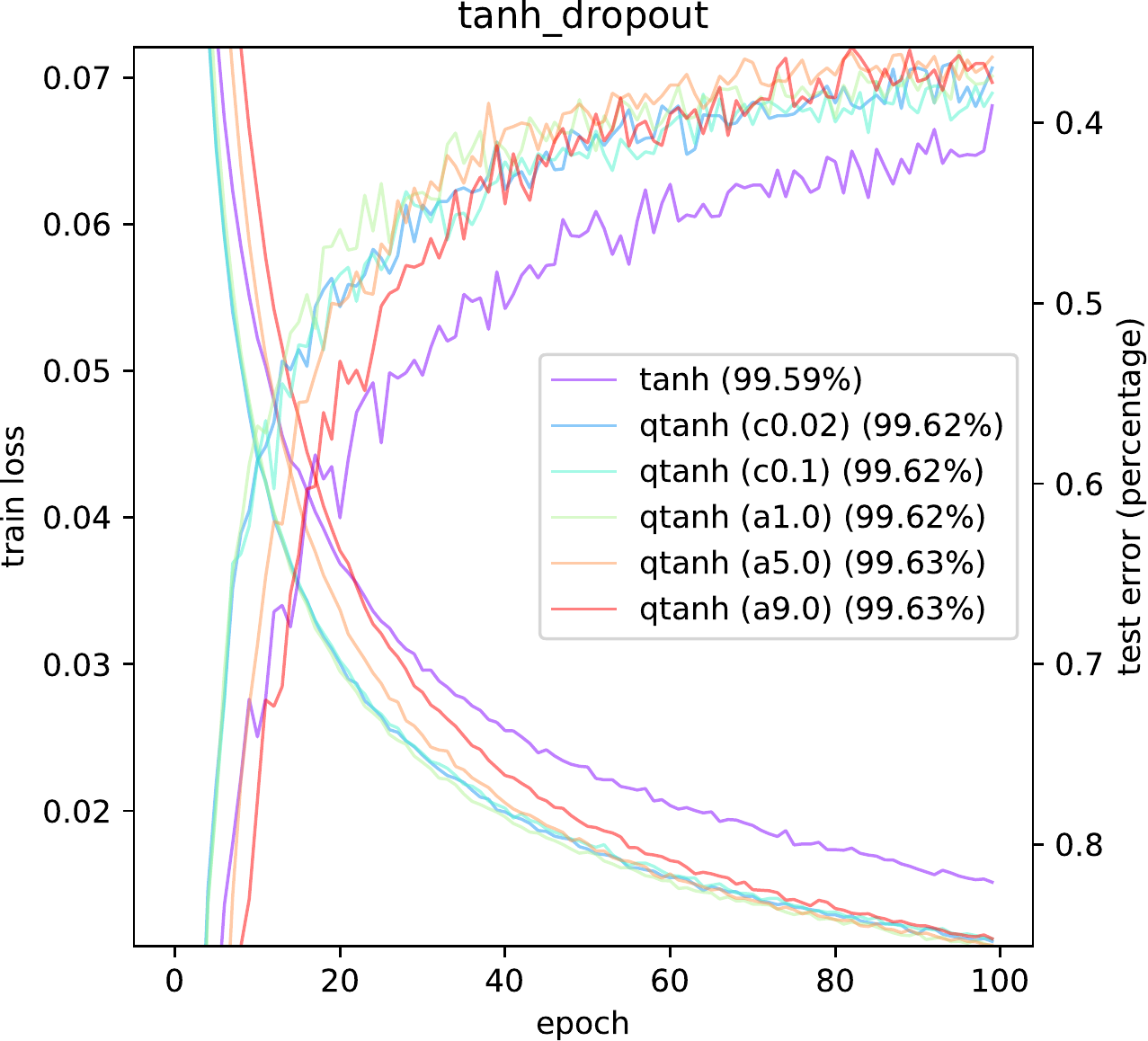}
\end{subfigure}
\begin{subfigure}[b]{\myfigwidth}
\includegraphics[width=\textwidth]{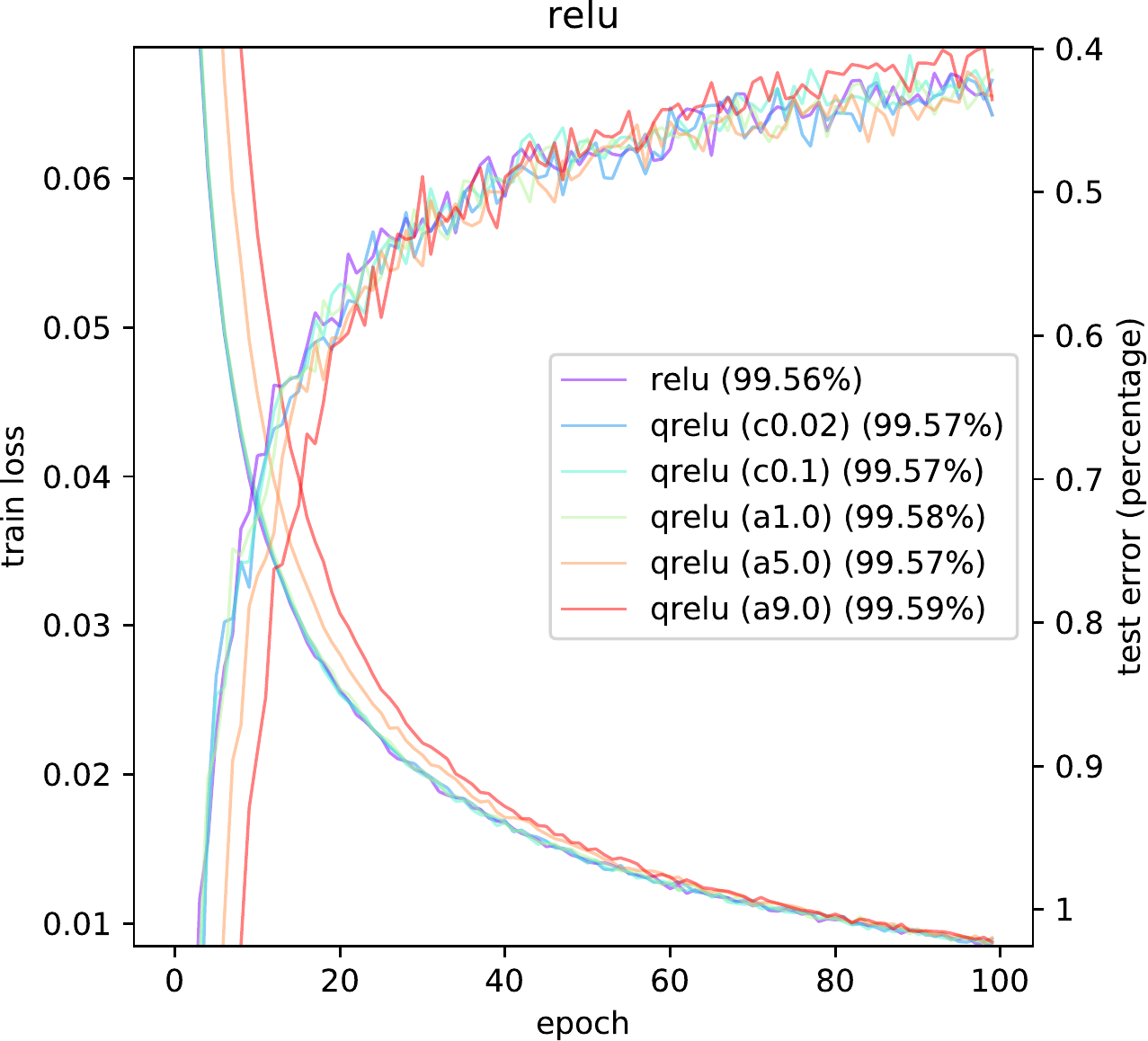}
\end{subfigure}
\begin{subfigure}[b]{\myfigwidth}
\includegraphics[width=\textwidth]{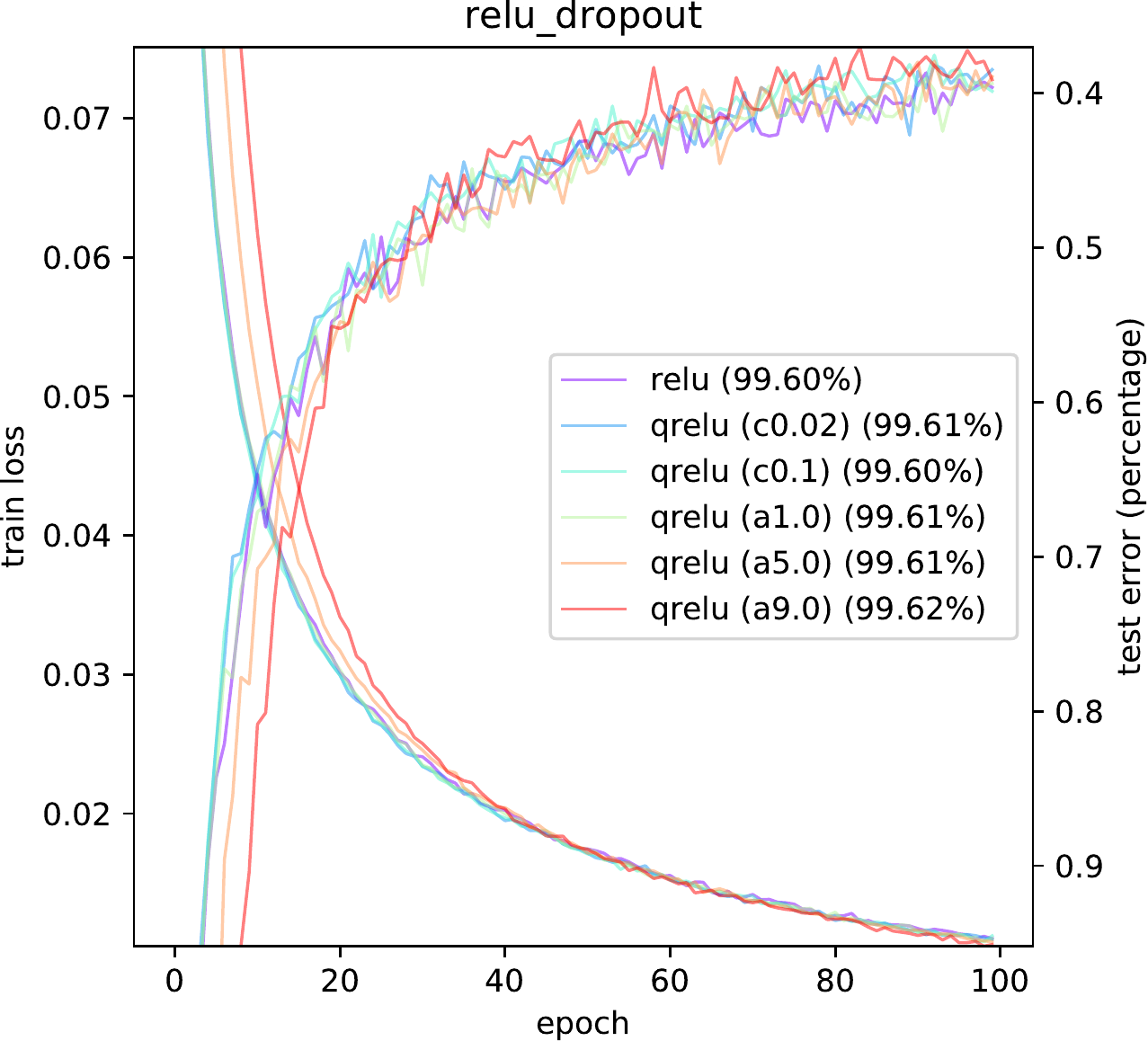}
\end{subfigure}
\begin{subfigure}[b]{\myfigwidth}
\includegraphics[width=\textwidth]{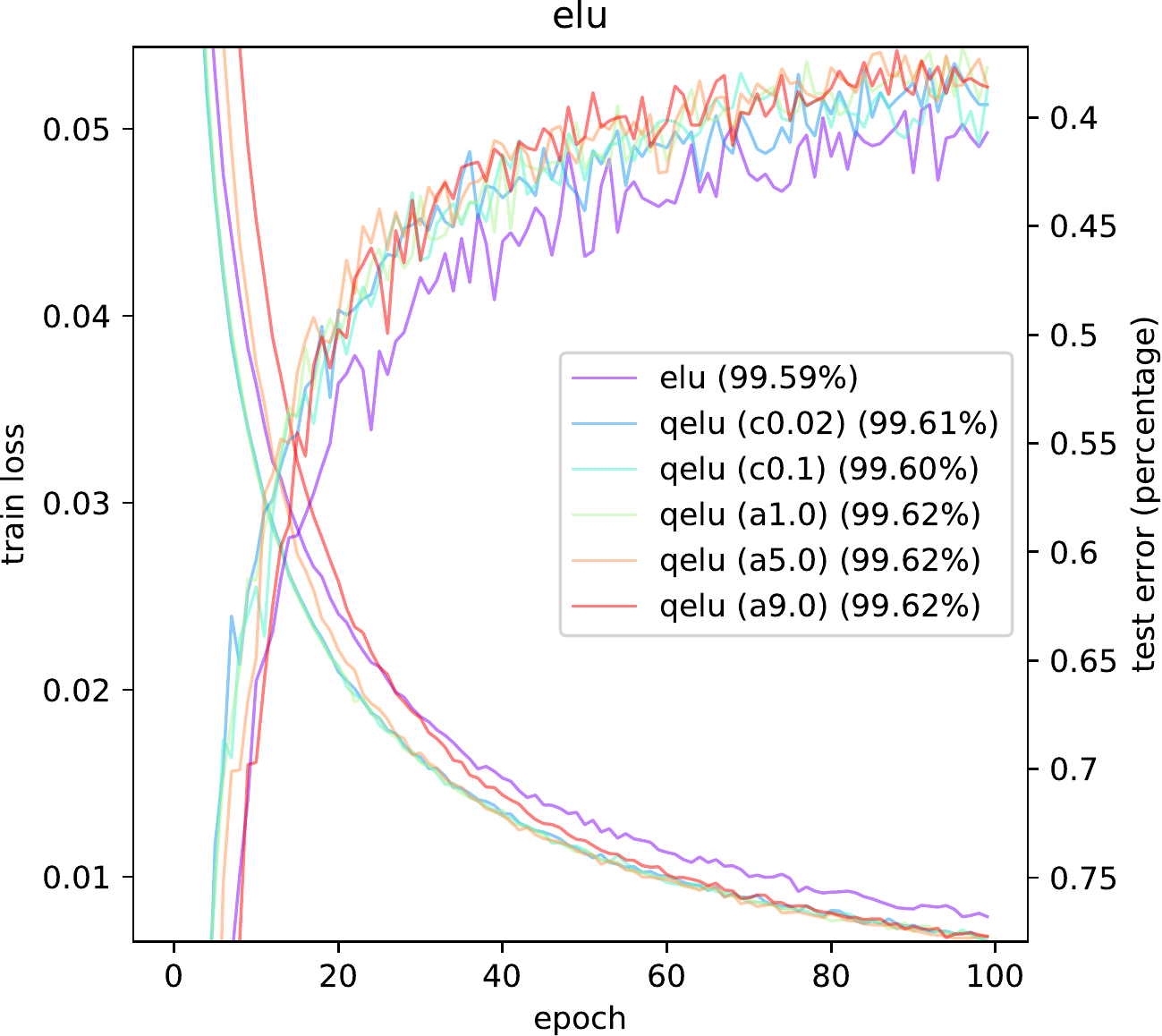}
\end{subfigure}
\begin{subfigure}[b]{\myfigwidth}
\includegraphics[width=\textwidth]{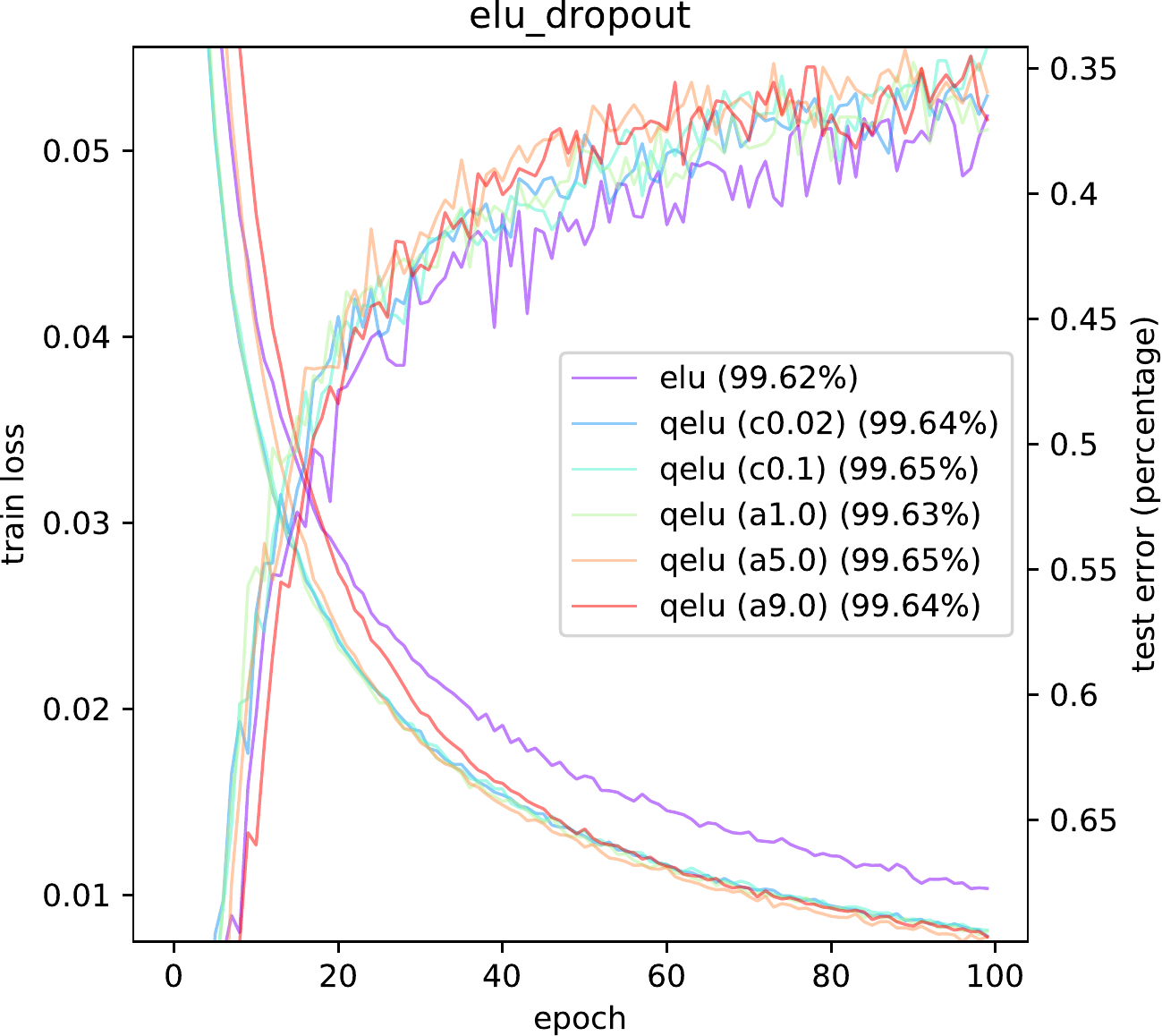}
\end{subfigure}
\begin{subfigure}[b]{\myfigwidth}
\includegraphics[width=\textwidth]{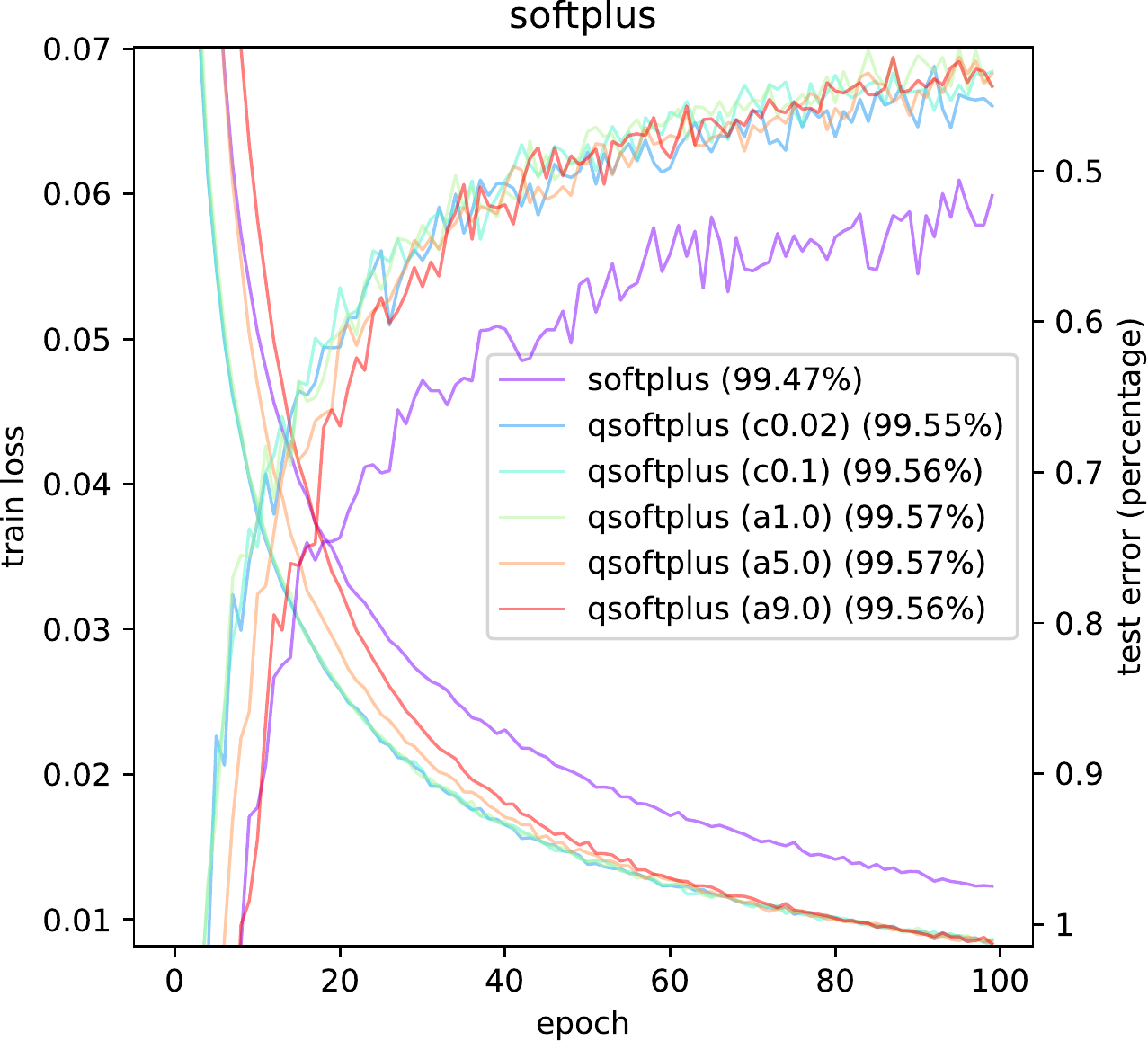}
\end{subfigure}
\begin{subfigure}[b]{\myfigwidth}
\includegraphics[width=\textwidth]{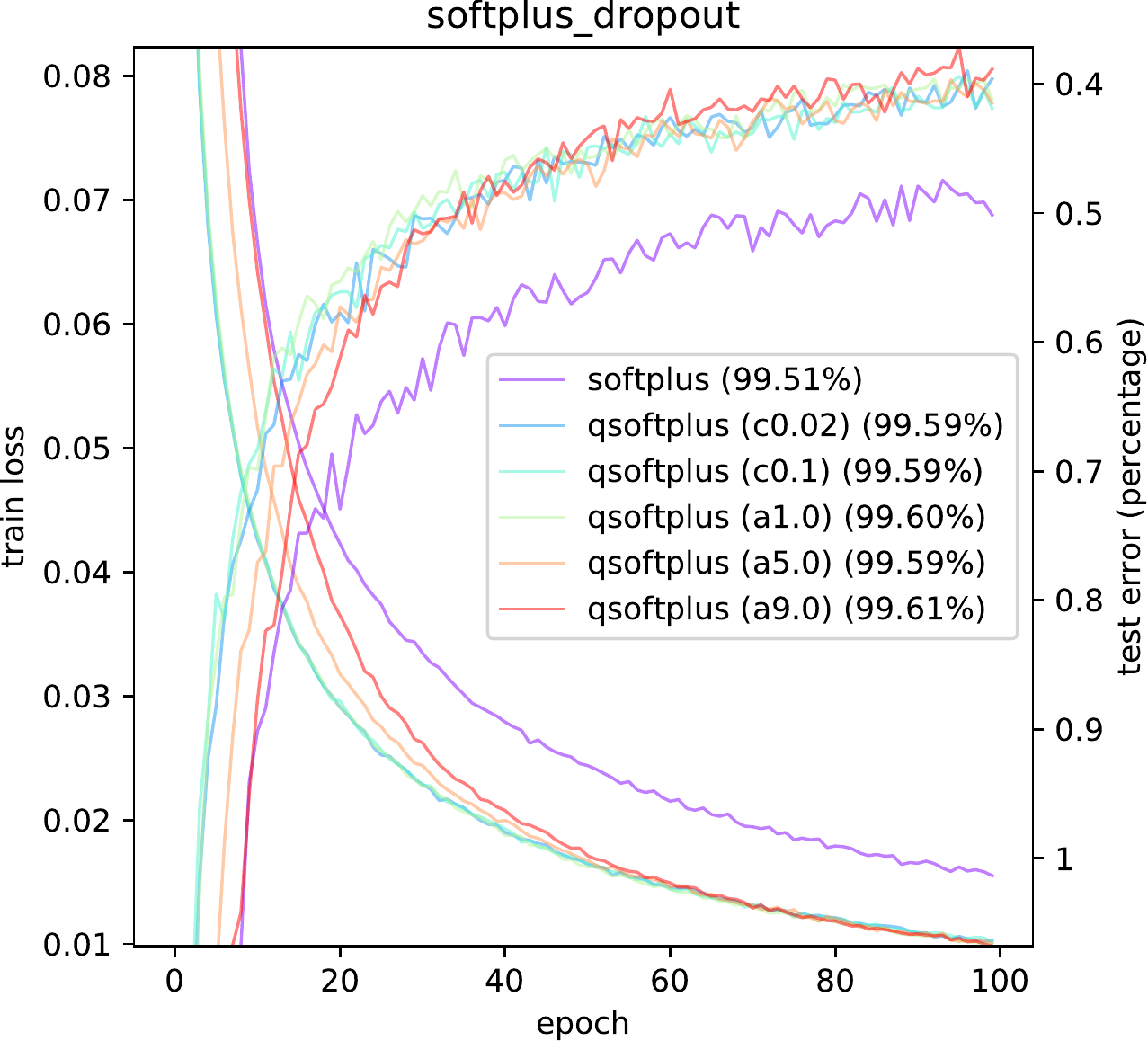}
\end{subfigure}
\caption{Training loss (descending curves) and testing accuracy (ascending curves) of a CNN on the MNIST dataset,
using different activation functions (from top to bottom), with (left) or without (right) dropout.}\label{fig:mnistcnn}
\end{figure*}

\begin{figure*}
\centering
\begin{subfigure}[b]{\myfigwidth}
\includegraphics[width=\textwidth]{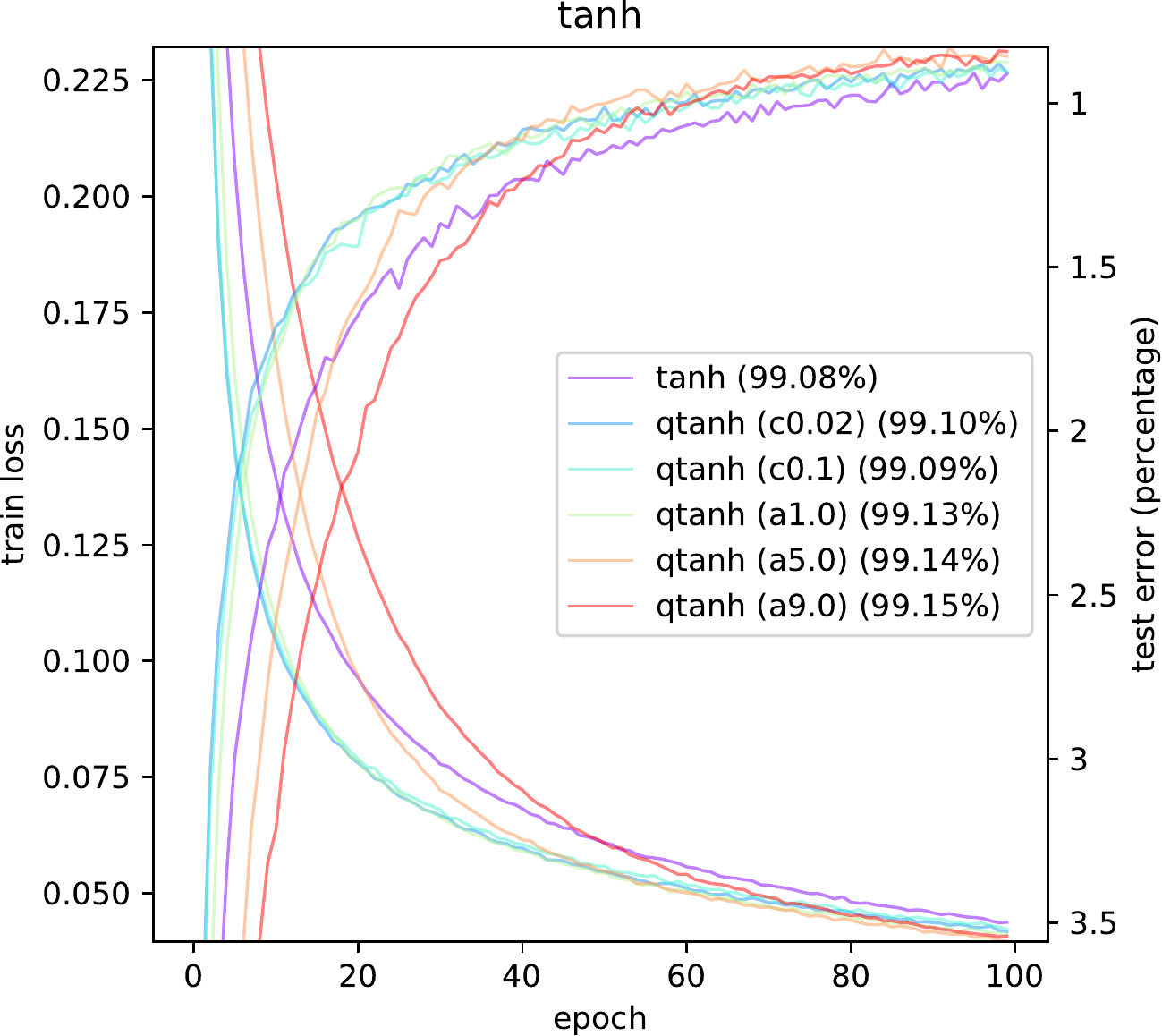}
\end{subfigure}
\begin{subfigure}[b]{\myfigwidth}
\includegraphics[width=\textwidth]{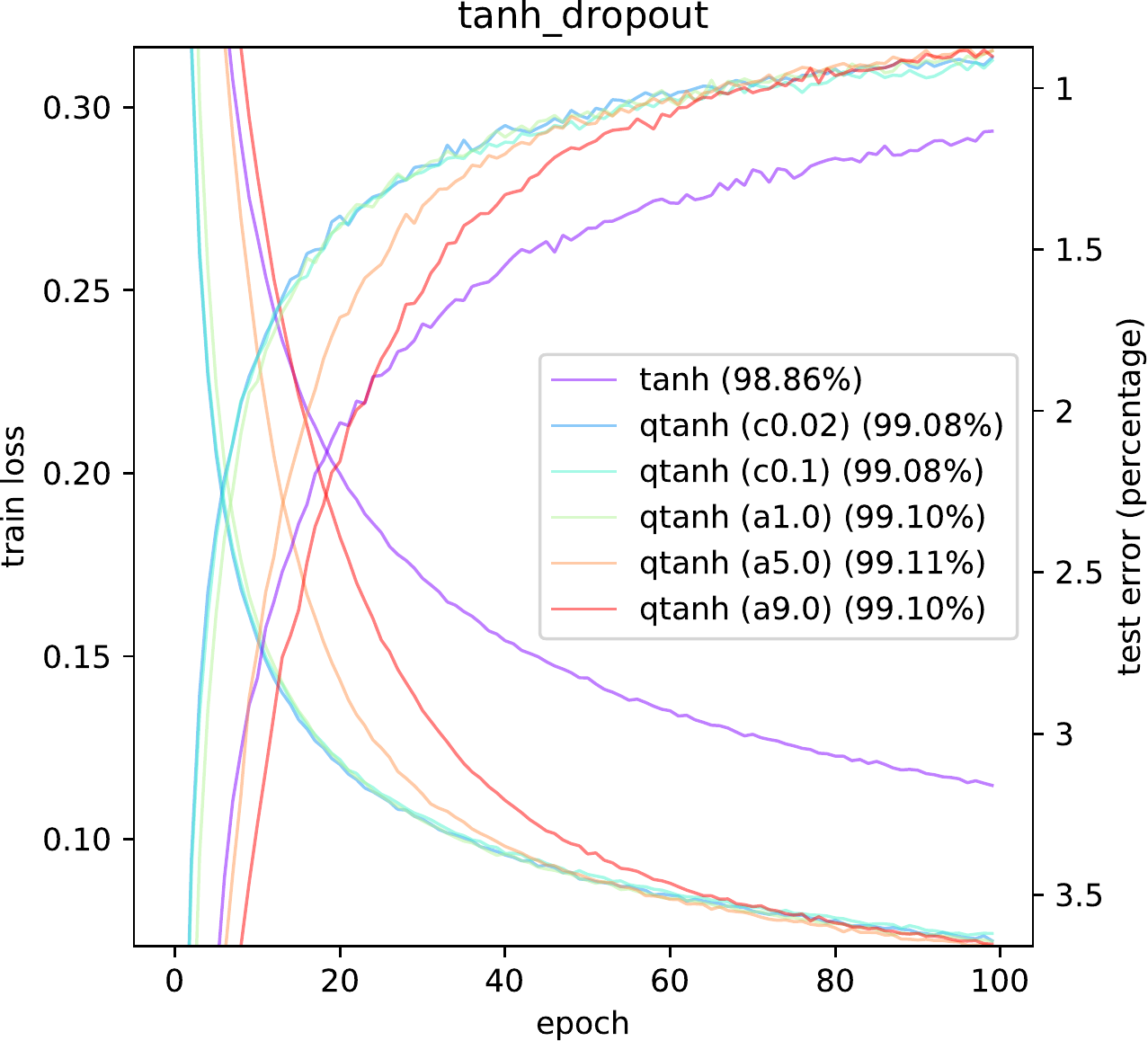}
\end{subfigure}
\begin{subfigure}[b]{\myfigwidth}
\includegraphics[width=\textwidth]{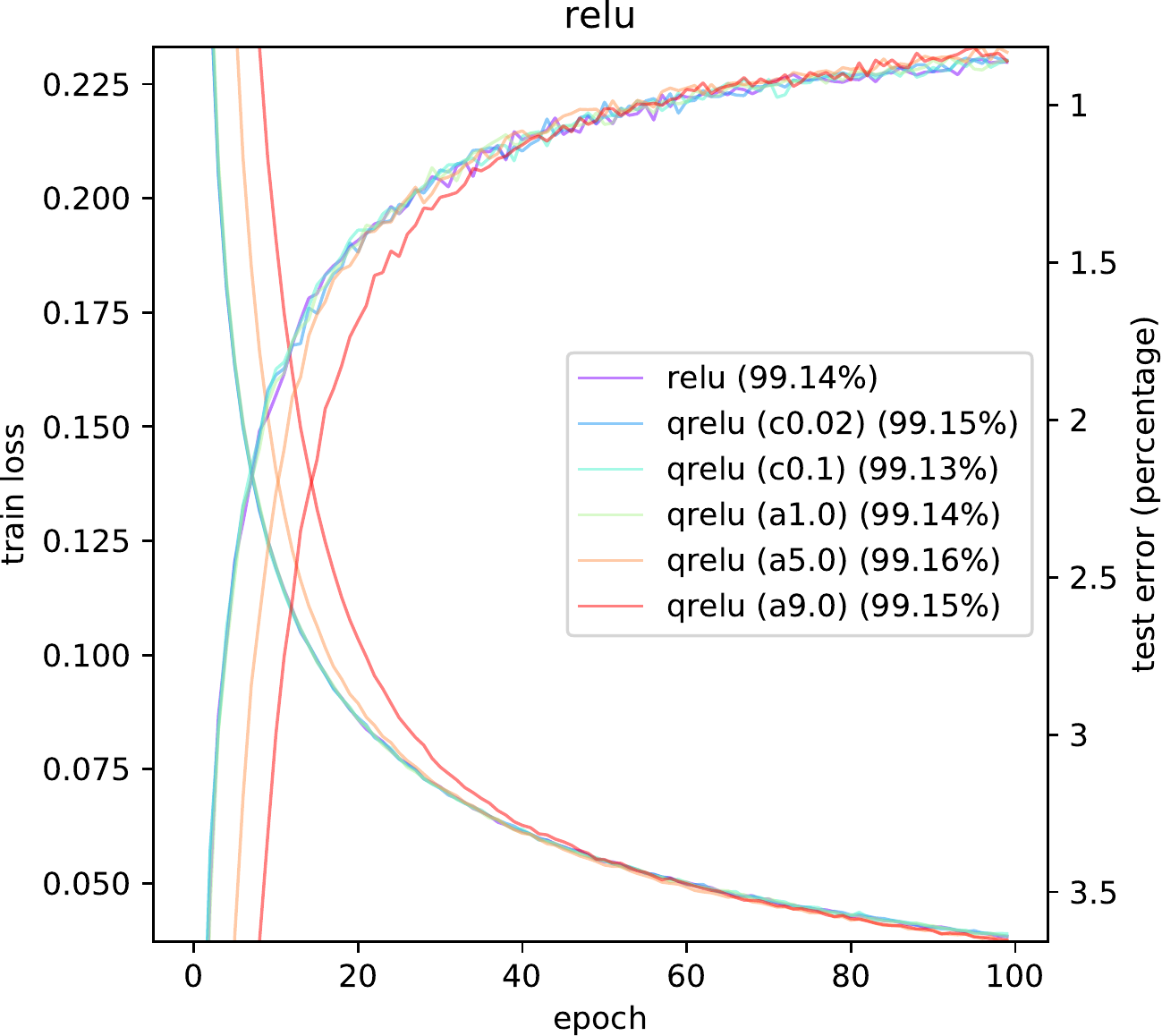}
\end{subfigure}
\begin{subfigure}[b]{\myfigwidth}
\includegraphics[width=\textwidth]{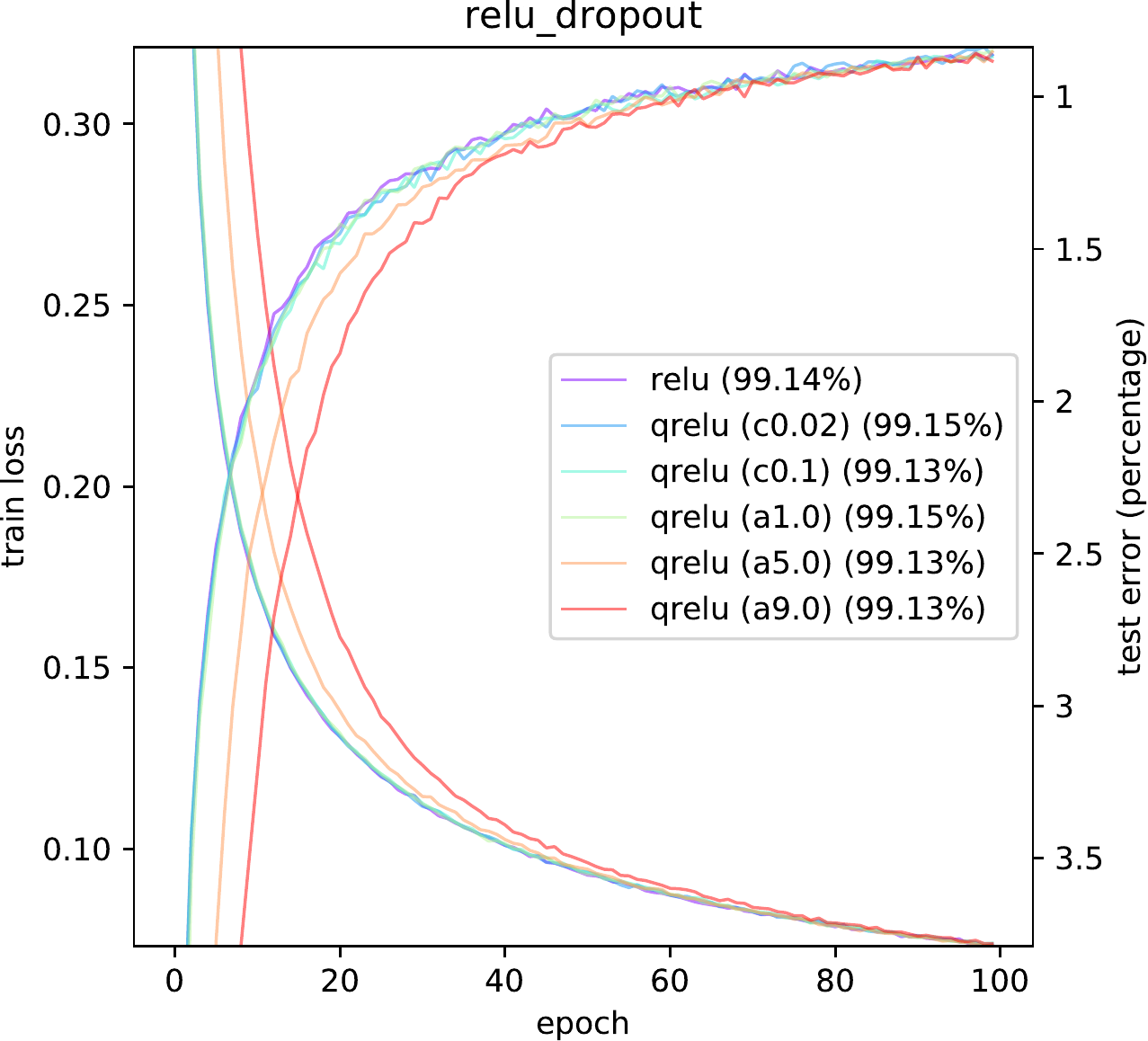}
\end{subfigure}
\begin{subfigure}[b]{\myfigwidth}
\includegraphics[width=\textwidth]{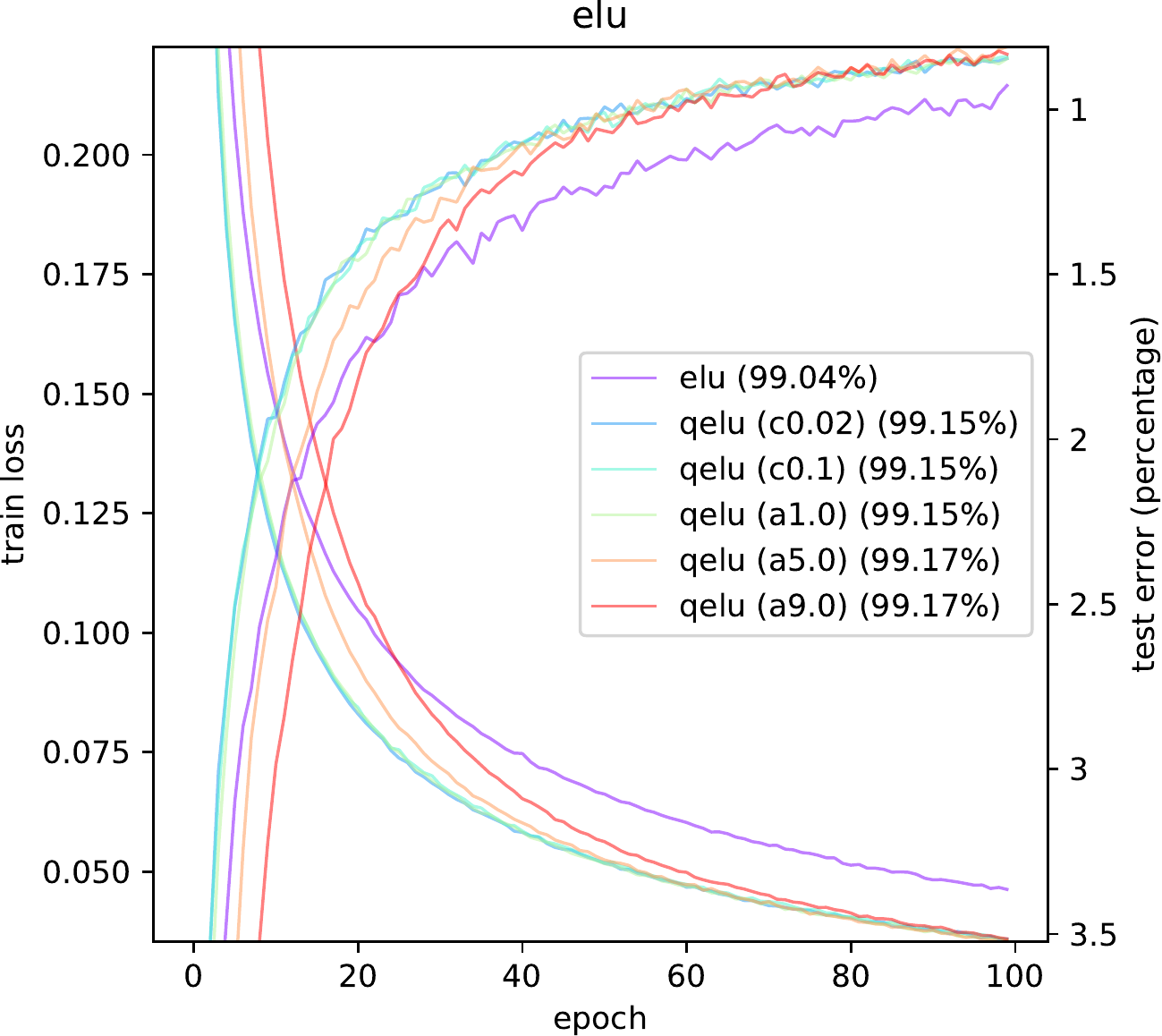}
\end{subfigure}
\begin{subfigure}[b]{\myfigwidth}
\includegraphics[width=\textwidth]{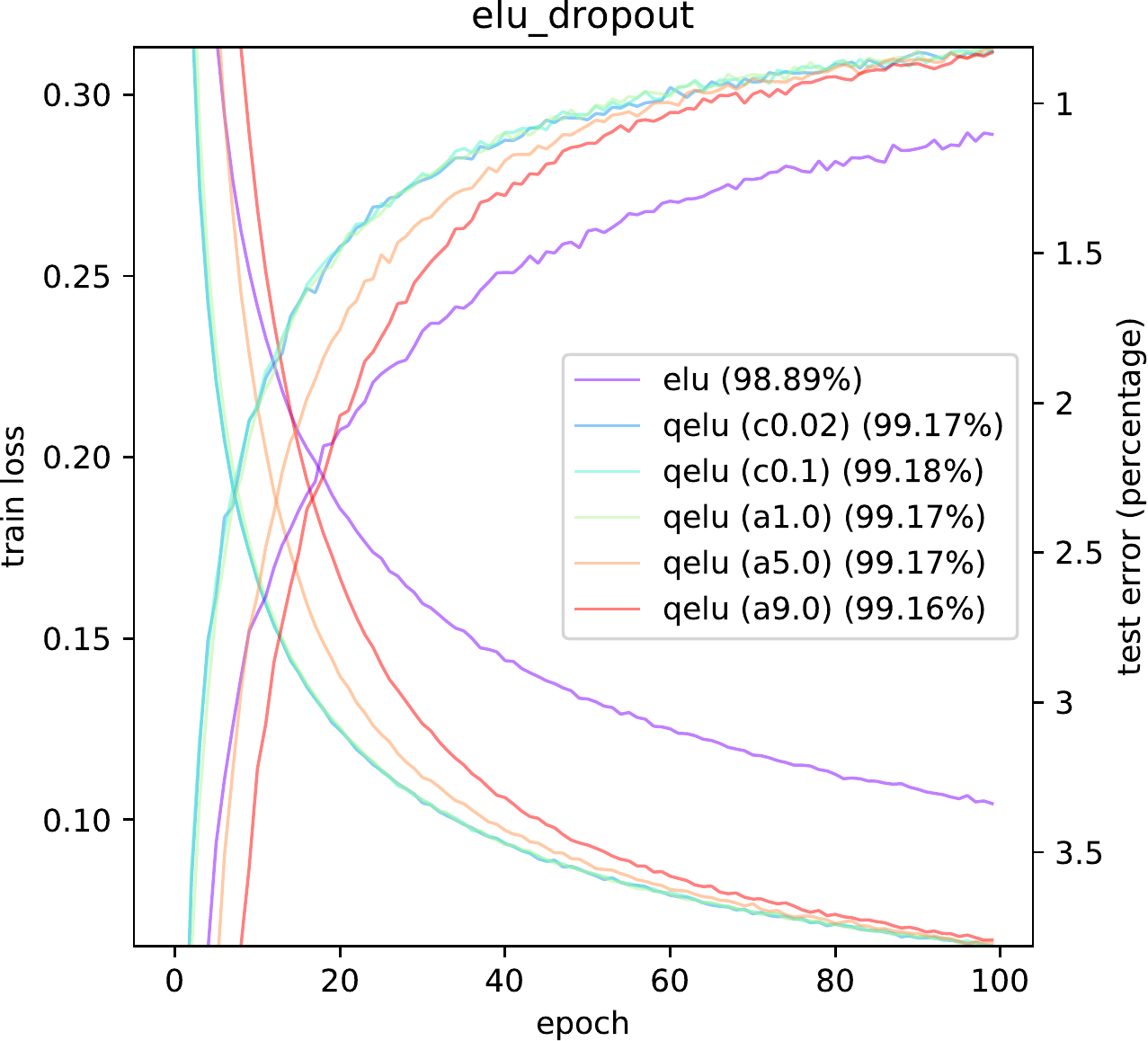}
\end{subfigure}
\begin{subfigure}[b]{\myfigwidth}
\includegraphics[width=\textwidth]{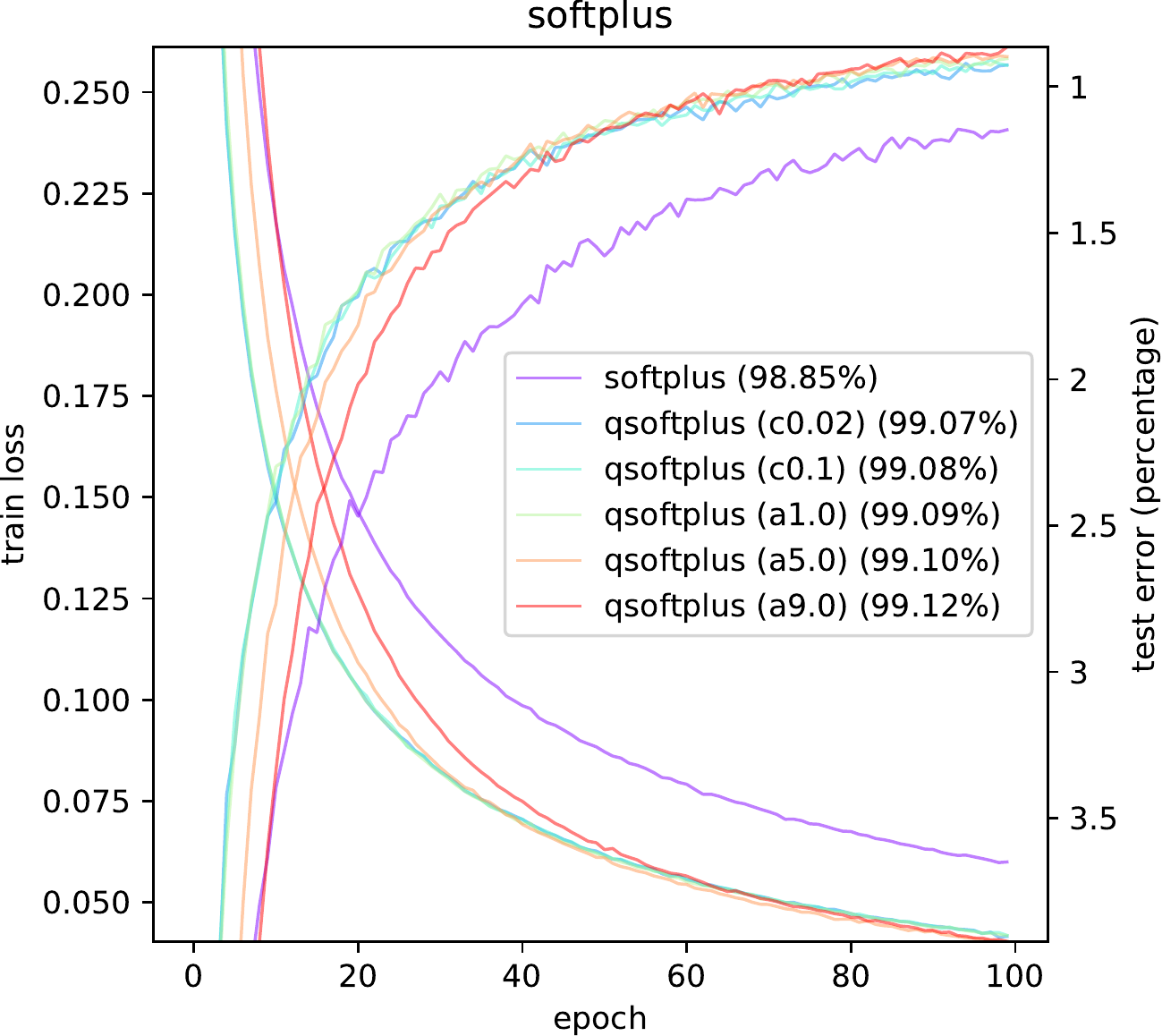}
\end{subfigure}
\begin{subfigure}[b]{\myfigwidth}
\includegraphics[width=\textwidth]{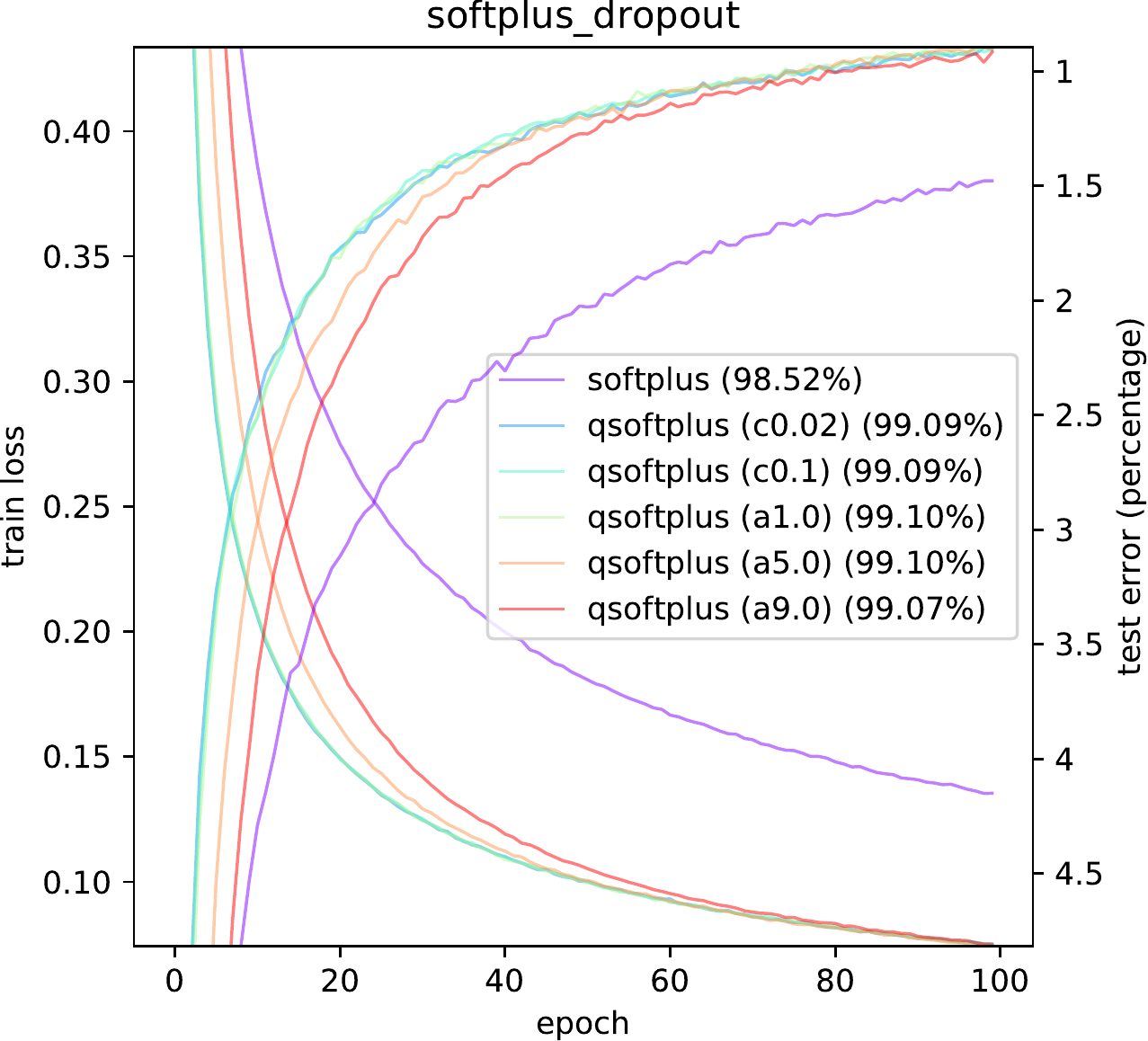}
\end{subfigure}
\caption{Training loss (descending curves) and testing accuracy (ascending curves) of a MLP on MNIST.}\label{fig:mnistmlp}
\end{figure*}

\begin{figure*}
\centering
\begin{subfigure}[b]{\myfigwidth}
\includegraphics[width=\textwidth]{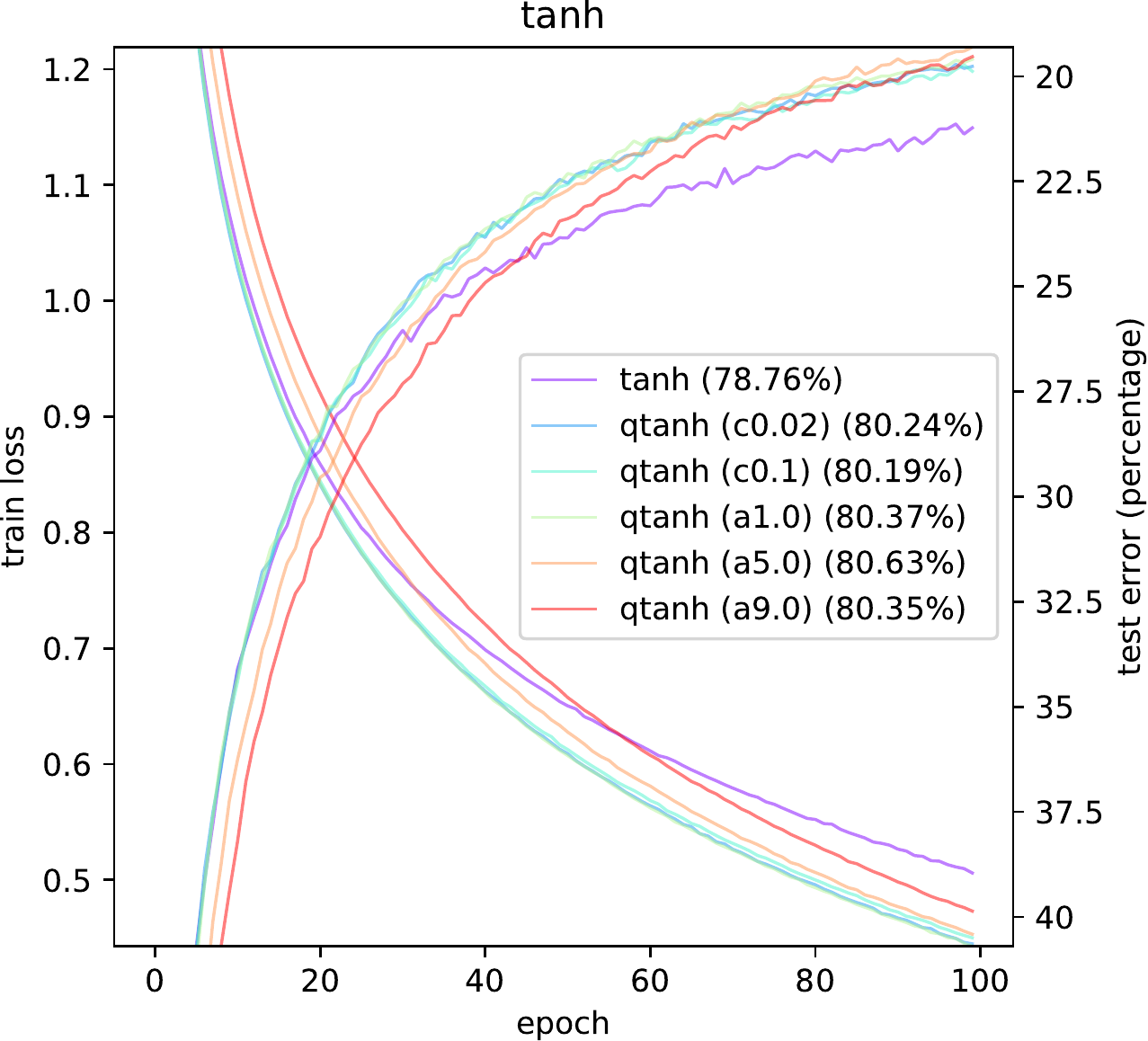}
\end{subfigure}
\begin{subfigure}[b]{\myfigwidth}
\includegraphics[width=\textwidth]{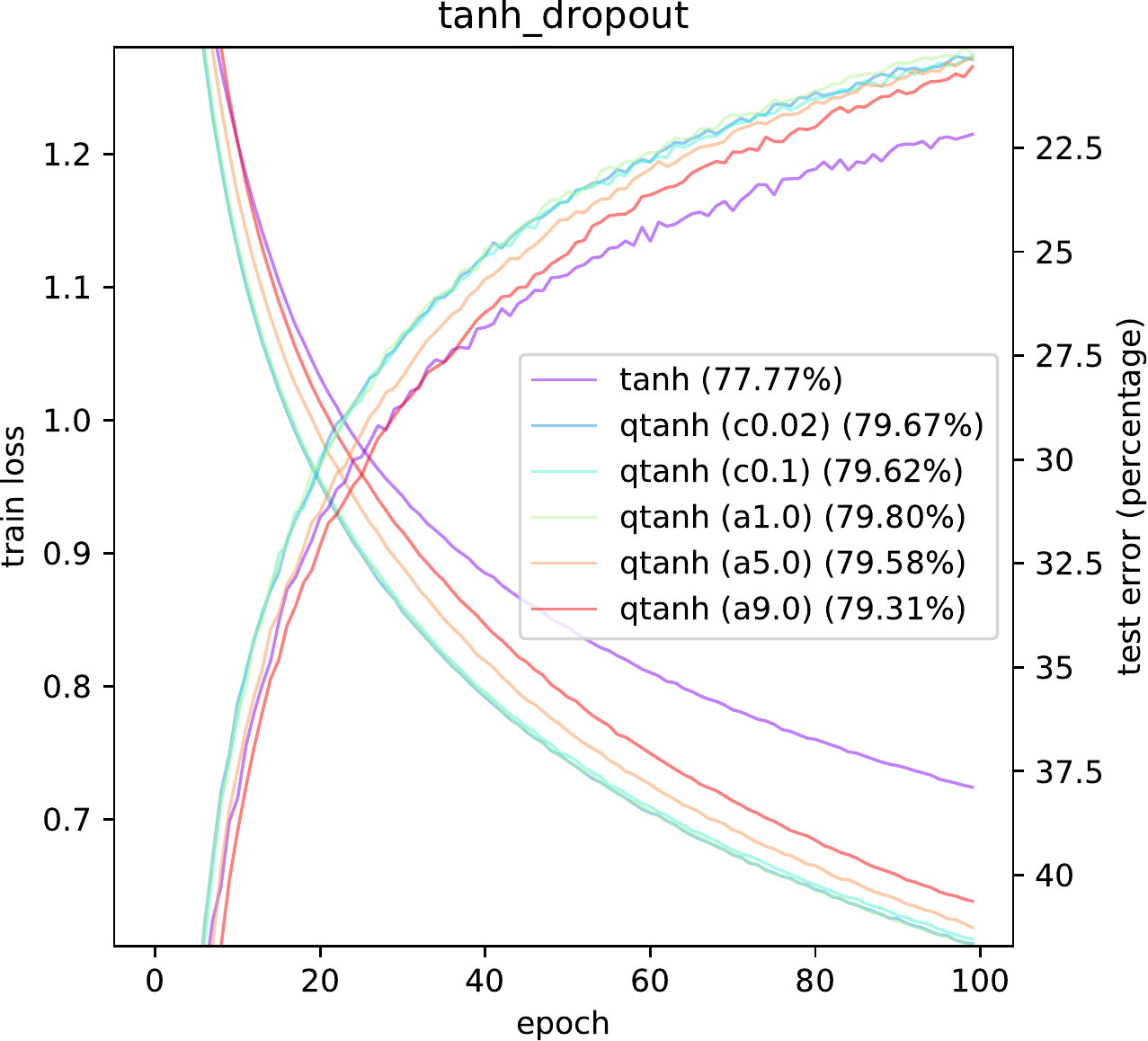}
\end{subfigure}
\begin{subfigure}[b]{\myfigwidth}
\includegraphics[width=\textwidth]{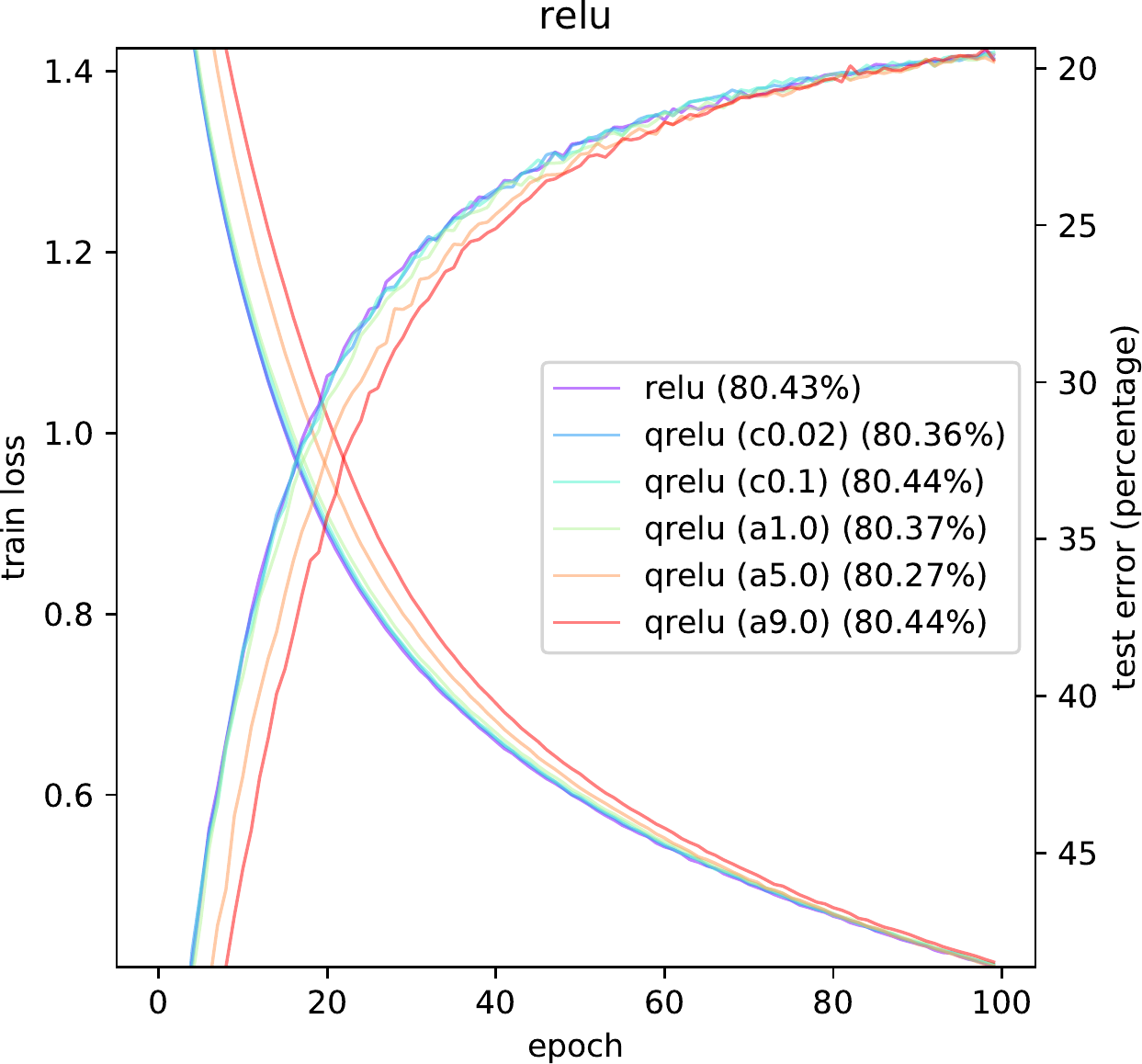}
\end{subfigure}
\begin{subfigure}[b]{\myfigwidth}
\includegraphics[width=\textwidth]{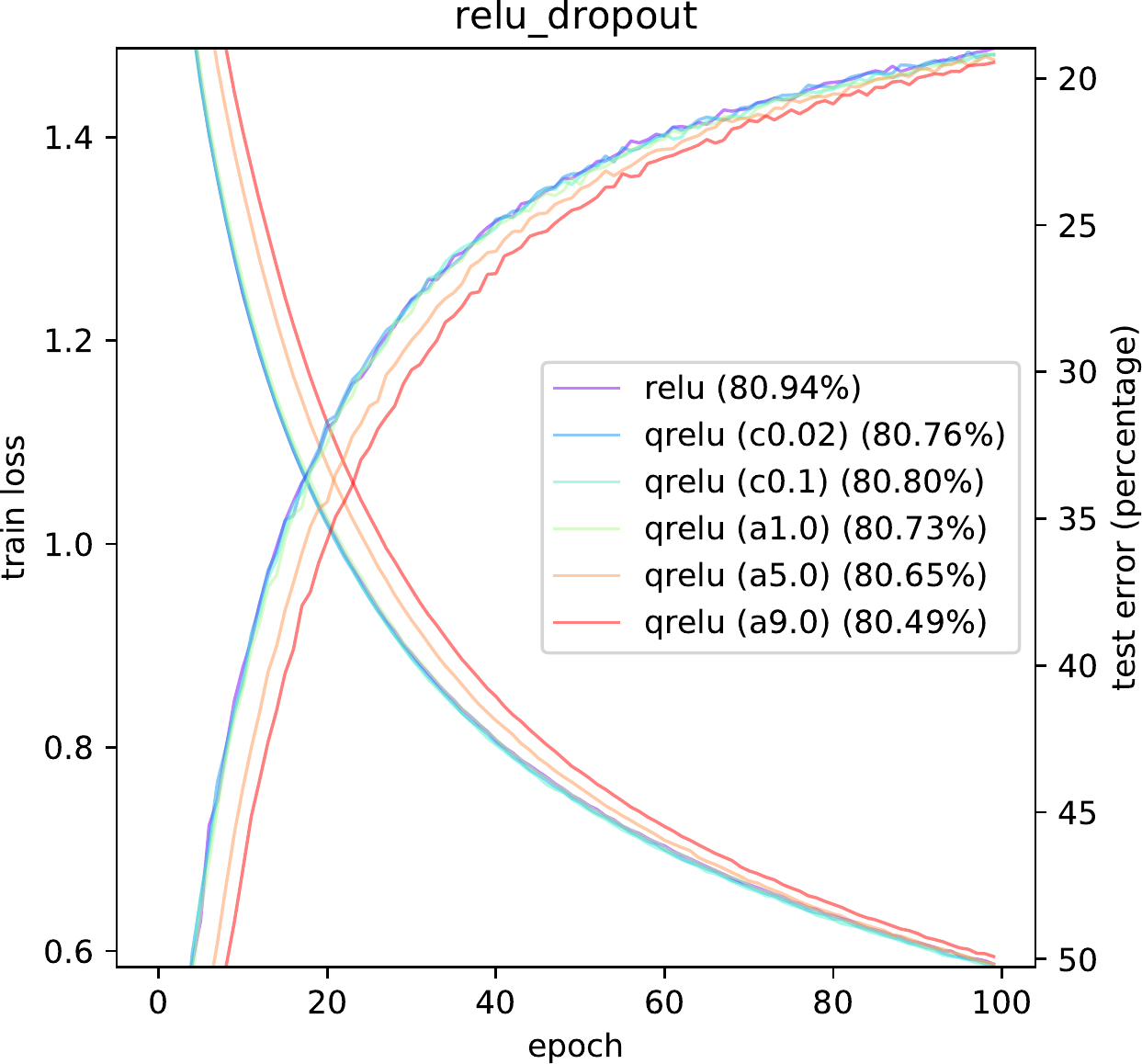}
\end{subfigure}
\begin{subfigure}[b]{\myfigwidth}
\includegraphics[width=\textwidth]{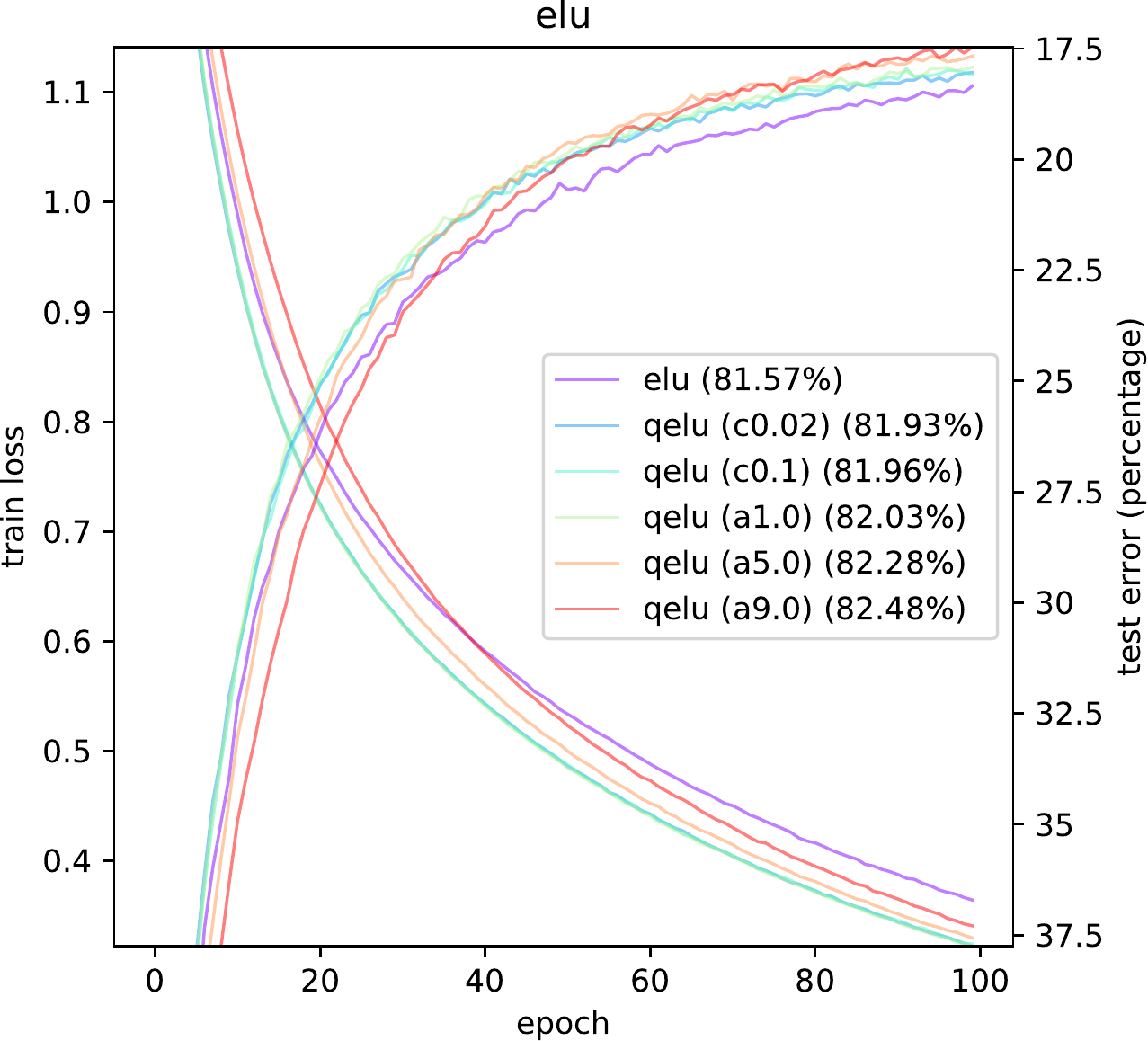}
\end{subfigure}
\begin{subfigure}[b]{\myfigwidth}
\includegraphics[width=\textwidth]{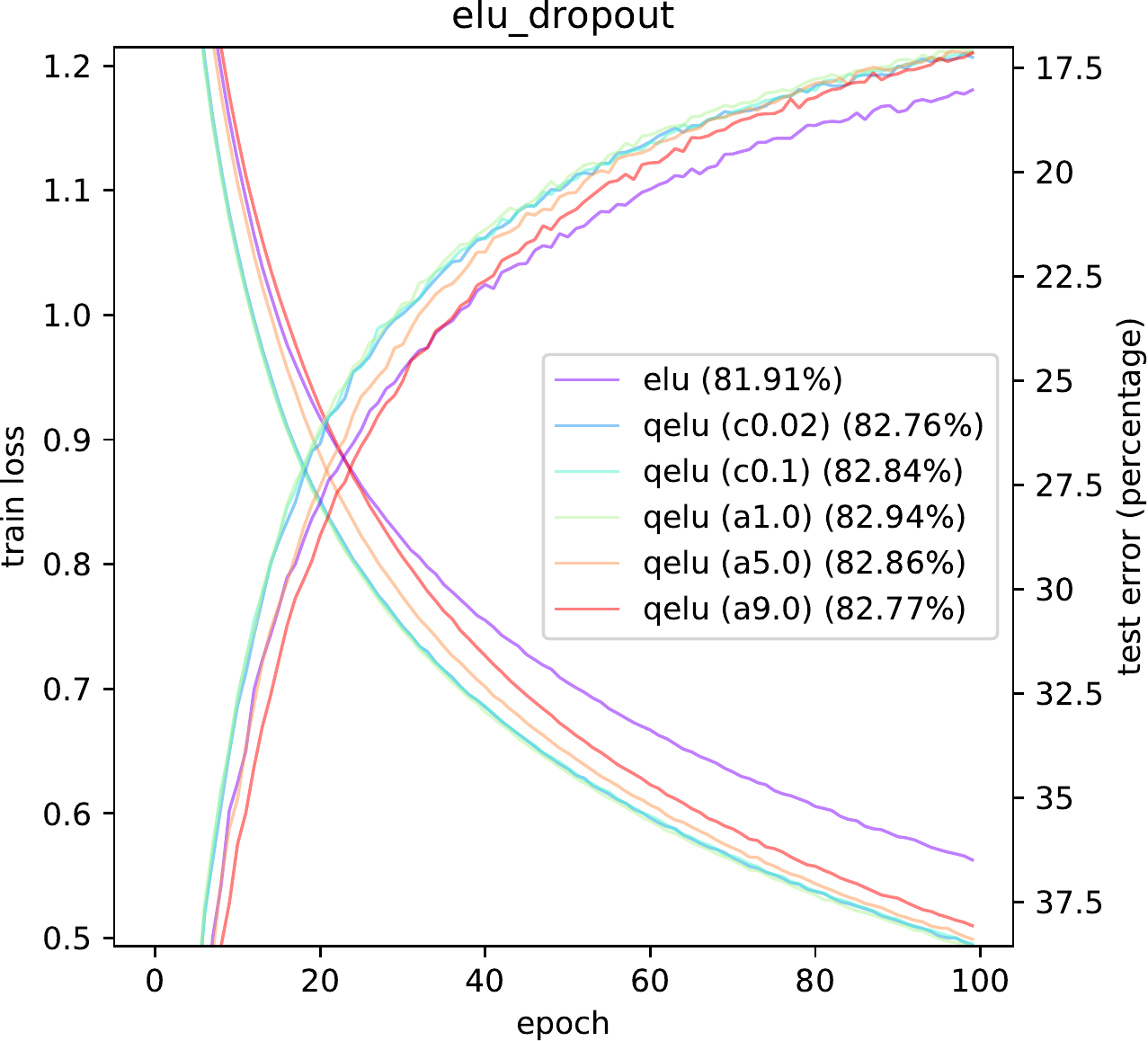}
\end{subfigure}
\begin{subfigure}[b]{\myfigwidth}
\includegraphics[width=\textwidth]{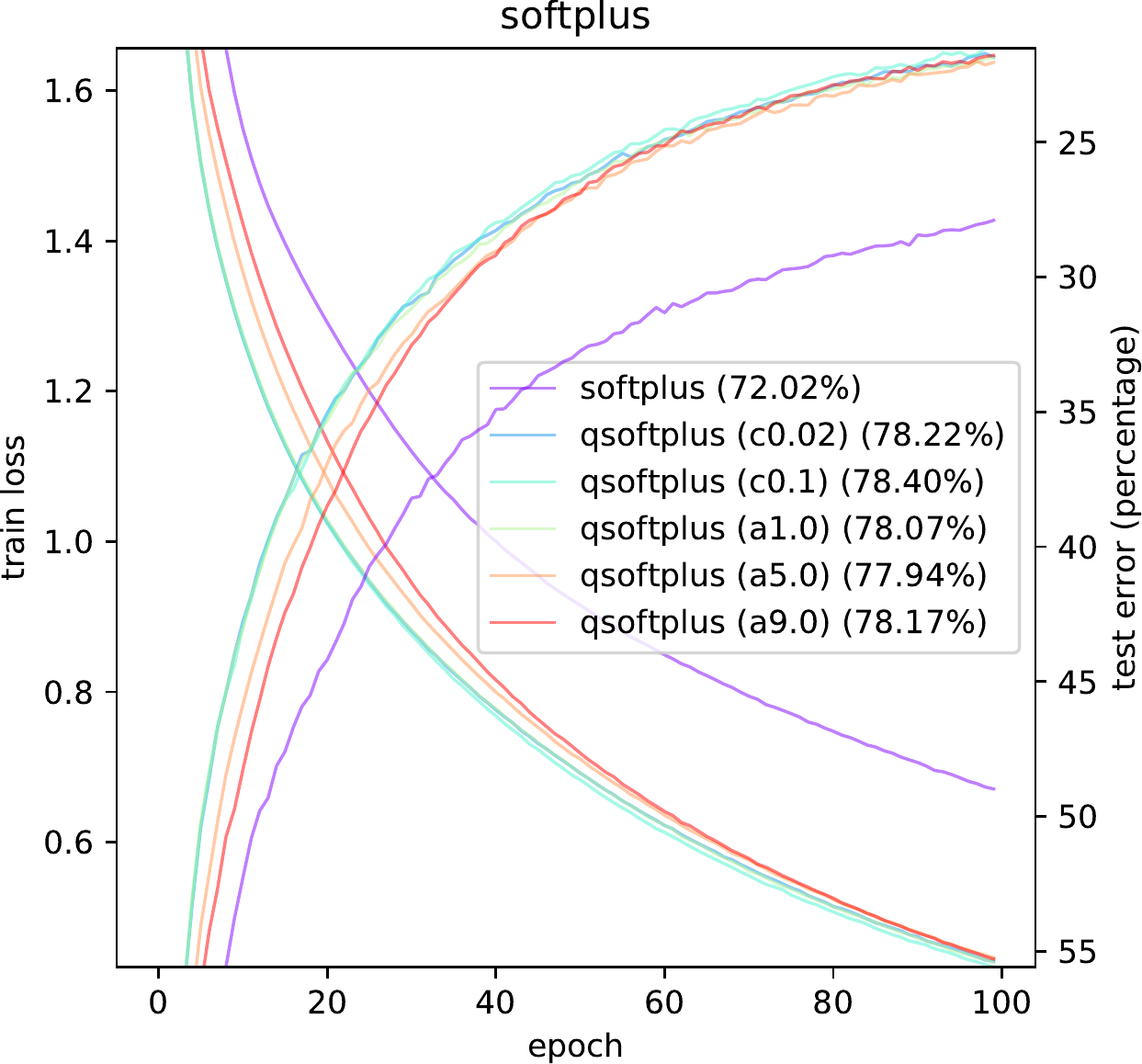}
\end{subfigure}
\begin{subfigure}[b]{\myfigwidth}
\includegraphics[width=\textwidth]{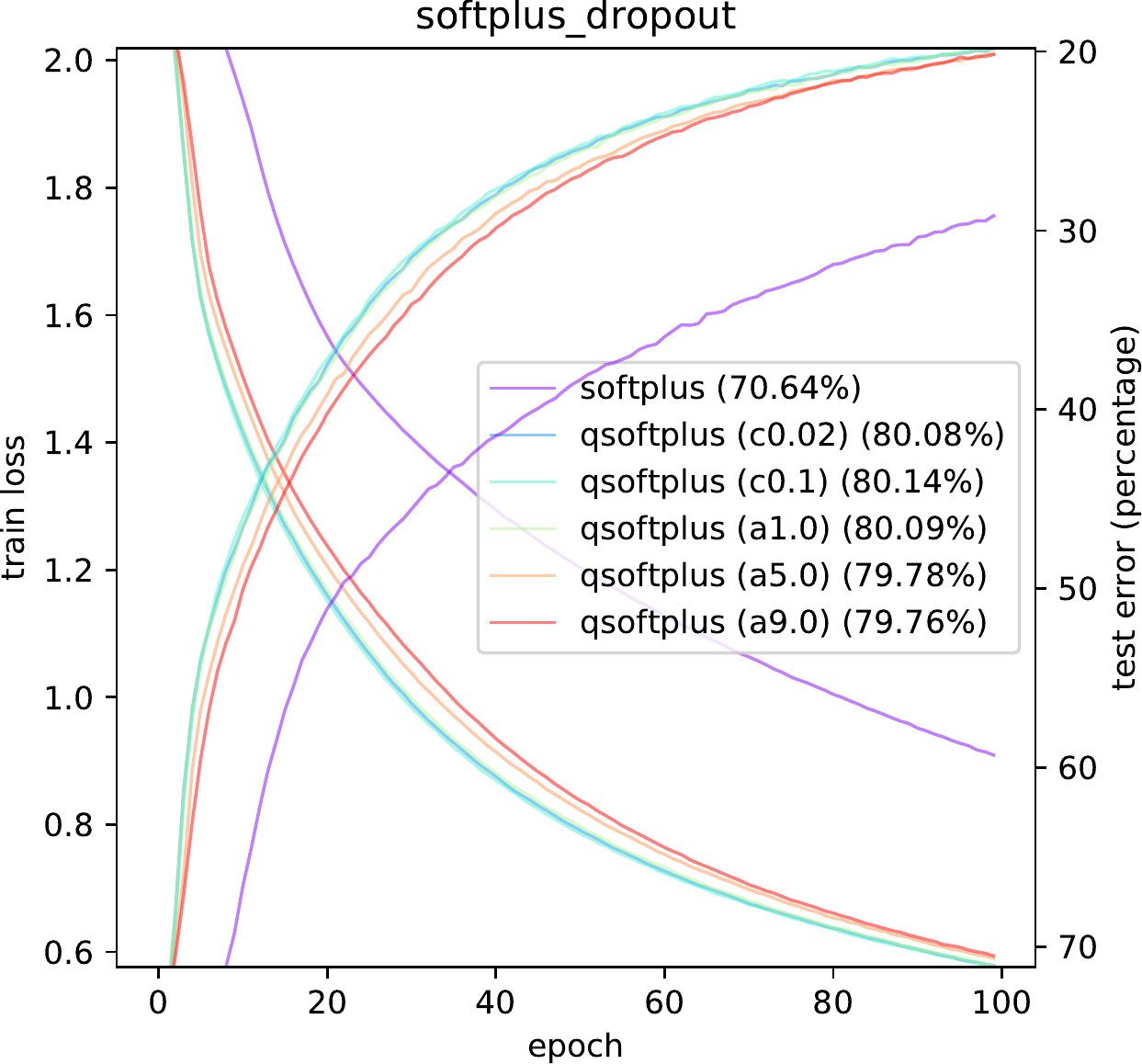}
\end{subfigure}
\caption{Training loss (descending curves) and testing accuracy (ascending curves) of a CNN on CIFAR10.}\label{fig:cifarcnn}
\end{figure*}

\section{Conclusion}
We proposed the stochastic $q$-neurons based on converting activation functions into corresponding stochastic $q$-activation functions using Jackson's $q$-calculus.
We found experimentally that $q$-neurons can consistently (although slightly) improve the
generalization performance, and can goes deeper in the error surface.


\appendix

\section{Brief overview of the $(p,q)$-differential calculus}\label{sec:pqgradient}

For $p,q\in\bbR, p\not =q$, define the $(p,q)$-differential:
	$$
	d_{p,q} f(x) \eqdef f(qx)-f(px).
	$$
	In particular, $d_{p,q} x=(q-p)x$.
	
The $(p,q)$-derivative is then obtained as:
	$$
	D_{p,q} f(x) \eqdef \frac{d_{p,q} f(x)}{d_{p,q} x} =  \frac{f(qx)-f(px)}{(q-p)x}.
	$$
	We have $D_{p,q} f(x)=D_{q,p} f(x)$.

Consider a real-valued scalar function $f(x)$.
The differential operator $D$ consists in taking the derivative: $D f(x)=\frac{d}{\dx}=f'(x)$.

The {\em $(p,q)$-differential operator} $D_{p,q}$  for two distinct scalars $p$ and $q$ is defined by taking the following finite difference ratio:

\begin{equation}
D_{p,q} f(x) \eqdef
\left\{
\begin{array}{ll}
\frac{f(px)-f(qx)}{(p-q)x},& x\not =0\mbox{\ and\ } p\not= q,\\
f'(0),& x=0 \mbox{\ or\ } p=q.
\end{array}
\right.
\end{equation}
We have $D_{p,q} f(x)=D_{q,p} f(x)$.

The $(p,q)$-derivative  is an extension of Jackson's {\em $q$-derivative}~\cite{Jackson-1909,qbook-2001,qbook-2012,pq-2013} historically introduced in 1909.
Notice that this finite difference differential operator that does not require to compute limits (a useful property for derivative-free optimization), and moreover can be applied even to
nondifferentiable (e.g., ReLU) or discontinuous functions.

An important property of the $(p,q)$-derivative is that it generalizes the ordinary derivative:
\begin{lemma}
For a twice continuously differentiable function $f$, 
we have $\lim_{p\rightarrow q} D_{p,q} f(x)= \frac{1}{q} D f(qx)=\frac{1}{q} f'(qx)$ and $\lim_{x\rightarrow 0} D_{p,q} f(x)=D f(0)=f'(0)$.
\end{lemma}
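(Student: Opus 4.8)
The two limits both reduce to recognizing the finite difference ratio $D_{p,q}f(x)=\frac{f(px)-f(qx)}{(p-q)x}$ as an ordinary difference quotient, so my plan is to avoid any $q$-calculus machinery and simply invoke the definition of the derivative. The $C^2$ hypothesis is in fact stronger than needed: differentiability ($C^1$) suffices for both statements, with twice-differentiability only buying a clean Lagrange-remainder argument for the second limit.

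For the first limit I would fix $x\neq 0$ and $q$, and view the ratio as a function of $p$. Setting $g(p)\eqdef f(px)$, the chain rule gives $g'(p)=x\,f'(px)$, and
\begin{equation*}
D_{p,q}f(x)=\frac{f(px)-f(qx)}{(p-q)x}=\frac{1}{x}\cdot\frac{g(p)-g(q)}{p-q},
\end{equation*}
which tends to $g'(q)/x=f'(qx)$ as $p\to q$. Thus the true limit is $f'(qx)$. To reconcile this with the form $\tfrac1q D f(qx)$ written in the statement, I would read $D f(qx)$ as the derivative of the composite $x\mapsto f(qx)$, namely $\tfrac{d}{dx}f(qx)=q\,f'(qx)$, so that $\tfrac1q D f(qx)=f'(qx)$ as well. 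I would sanity-check the scalar factor on the toy example $f(x)=x^2$, where $D_{p,q}f(x)=(p+q)x\to 2qx=f'(qx)$, confirming that the correct limit is $f'(qx)$ (and not $\tfrac1q f'(qx)$).

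For the second limit I would fix $p\neq q$ and let $x\to 0$, splitting the numerator around $f(0)$:
\begin{equation*}
D_{p,q}f(x)=\frac{[f(px)-f(0)]-[f(qx)-f(0)]}{(p-q)x}=\frac{1}{p-q}\left(p\,\frac{f(px)-f(0)}{px}-q\,\frac{f(qx)-f(0)}{qx}\right).
\end{equation*}
As $x\to 0$ each inner difference quotient converges to $f'(0)$ by the definition of the derivative at $0$, so the bracket tends to $p f'(0)-q f'(0)$ and the whole expression to $\tfrac{(p-q)f'(0)}{p-q}=f'(0)$. Alternatively, exploiting the $C^2$ hypothesis, I would Taylor-expand $f(px)$ and $f(qx)$ to second order about $0$ with Lagrange remainder: the $f(0)$ terms cancel, the first-order terms combine to $(p-q)x\,f'(0)$, and the quadratic remainder, after division by $(p-q)x$, is $O(x)$ and vanishes.

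There is no deep obstacle: both limits are routine once the ratio is rewritten as a standard difference quotient. The only points demanding care are the bookkeeping of the scalar $1/q$ in the first limit, which hinges on interpreting $D f(qx)$ as differentiation of the composite so that the limit is $f'(qx)$; and, in the Taylor route for the second limit, the justification that the remainder stays bounded as $x\to 0$, which is exactly where the continuity of $f''$ supplied by the $C^2$ assumption is used.
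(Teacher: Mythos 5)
Your proof is correct, but it takes a genuinely different route from the paper's. The paper proves both limits at once from a single first-order Taylor expansion with exact Lagrange remainder, $f(qx)=f(px)+(qx-px)f'(px)+\frac{1}{2}(qx-px)^2f''(\varepsilon)$, which yields $D_{p,q}f(x)=f'(px)+\frac{1}{2}x(q-p)f''(\varepsilon)$; both limits then follow because the remainder vanishes as $p\to q$ (fixed $x$) and as $x\to 0$ (fixed $p\neq q$), and this is exactly where the $C^2$ hypothesis enters. You instead treat each limit as an ordinary difference quotient — for $p\to q$, the quotient of $g(p)=f(px)$ in the variable $p$; for $x\to 0$, a splitting of the numerator around $f(0)$ — which buys generality: as you note, differentiability at the relevant point suffices, so the paper's $C^2$ assumption is stronger than needed for the limits themselves. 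What the paper's expansion buys in exchange is a quantitative error term that is reused verbatim in the corollary immediately following the lemma, where the remainder $\frac{1}{2}(qx-px)^2f''(\varepsilon)$ is identified with the Bregman divergence $B_F(qx:px)$; your route would not produce that corollary for free. Your handling of the $1/q$ factor is also right and matches the paper's implicit convention: in its proof the paper writes $D_{p,q}f(x)=f'(px)=\frac{1}{p}Df(px)$, i.e., $Df(px)$ is read as the derivative of the composite $x\mapsto f(px)$, so the limit is indeed $f'(qx)$, as your $f(x)=x^2$ sanity check confirms (the statement's notation $\frac{1}{q}Df(qx)=\frac{1}{q}f'(qx)$ is internally inconsistent if $f'(qx)$ means $f'$ evaluated at $qx$, and you were right to flag it). One microscopic caveat in your second argument: the rewriting $\frac{f(px)-f(0)}{x}=p\,\frac{f(px)-f(0)}{px}$ presumes $p\neq 0$ (similarly $q\neq 0$); when $p=0$ the corresponding numerator term is identically zero and the conclusion $f'(0)$ still holds, so this is a one-line remark to add rather than a gap.
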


\begin{proof}
Let us write the first-order Taylor expansion of $f$ with exact Lagrange remainder for a twice continuously differentiable function $f$:
$$
f(qx)=f(px)+(qx-px)f'(px)+\frac{1}{2}(qx-px)^2f''(\varepsilon),
$$
for $\varepsilon\in (\min\{px,qx\}, \max\{px,qx\})$.

It follows that
\begin{eqnarray}
D_{p,q} f(x) &=& \frac{f(px)-f(qx)}{x(p-q)} = \frac{f(qx)-f(px)}{x(q-p)},\\
&=& f'(px)+\frac{1}{2}x(q-p)f''(\varepsilon).
\end{eqnarray}

Thus, whenever $p=q$ we have $D_{p,q} f(x)=f'(px)= \frac{1}{p} D f(px)$, 
and whenever $x=0$, we have $D_{p,q} f(0)=f'(0)$.
In particular, when $p=1$, we have $D_{q} f(x)=f'(x)$ when $q=1$ or when $x=0$. 
\end{proof}

Let us denote $D_q$ the $q$-differential operator $D_q \eqdef D_{1,q}=D_{q,1}$.

Since $B_F(qx:px)\eqdef f(qx)-f(px)-(qx-px)f'(px)=\frac{1}{2}(qx-px)^2f''(\varepsilon)$, 
we can further express the $(p,q)$-differential operator using Bregman divergences~\cite{BD-2005} as follows:

\begin{corollary}
We have:
\begin{eqnarray*}
D_{p,q} f(x)&=&\frac{f(px)-f(qx)}{x(p-q)}=f'(px)+\frac{B_F(qx:px)}{x(p-q)},\\
&=& \frac{f(qx)-f(px)}{x(q-p)}=f'(qx)+\frac{B_F(px:qx)}{x(q-p)}.
\end{eqnarray*}
\end{corollary}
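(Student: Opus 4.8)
The plan is to obtain both identities as immediate algebraic rearrangements of the defining relation of the Bregman divergence, with no further analysis needed beyond what the preceding Lemma already supplies. The key observation is that the quantity $B_F(qx:px) = f(qx)-f(px)-(qx-px)f'(px)$ is, by construction, exactly the gap between the difference $f(qx)-f(px)$ and its first-order (tangent) approximation at the base point $px$. Dividing this defining relation through by the linear increment therefore reproduces the $(p,q)$-difference quotient as a derivative term plus a remainder, which is precisely the claimed decomposition.

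Concretely, I would first recall the definition $D_{p,q}f(x) = \frac{f(px)-f(qx)}{x(p-q)}$. Next I would solve the defining relation of the Bregman term for the difference, writing $f(qx)-f(px) = (qx-px)f'(px) + B_F(qx:px)$, and substitute this into the numerator. Since $qx-px = (q-p)x$, the linear part cancels against the factor in the denominator and isolates $f'(px)$, while the Bregman part contributes the remaining fraction, giving the first line. For the second line I would invoke the symmetry $D_{p,q}f(x)=D_{q,p}f(x)$ established earlier and repeat the argument with the roles of $p$ and $q$ interchanged, now expanding about $qx$ and using $B_F(px:qx)$, so that no independent computation is required.

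The only delicate point, and the step where I expect to be most careful, is the sign bookkeeping: the factor $(p-q)$ sits in the denominator, whereas the tangent increment appearing in $B_F(qx:px)$ is oriented as $(qx-px)=(q-p)x=-(p-q)x$. One must track these orientations consistently so that the first-order term cancels cleanly against the denominator and the Bregman remainder is assigned the correct sign; making the orientation of $(p-q)$ versus $(q-p)$ agree in the denominator and in the remainder is essentially the entire content of the verification. Everything else is routine substitution, and the Lagrange-remainder form $B_F(qx:px)=\frac{1}{2}(qx-px)^2 f''(\varepsilon)$ from the Lemma remains available as a sanity check on the correction term, since dividing it by $(q-p)x$ must reproduce the remainder $\frac{1}{2}x(q-p)f''(\varepsilon)$ already recorded there.
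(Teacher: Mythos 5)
Your route is the same as the paper's: the corollary there carries no separate proof and is presented as an immediate rearrangement of the identity $B_F(qx:px)= f(qx)-f(px)-(qx-px)f'(px)=\frac{1}{2}(qx-px)^2f''(\varepsilon)$ extracted from the Lemma's Taylor expansion, which is precisely your substitute-and-cancel plan together with the $p\leftrightarrow q$ symmetry for the second line. The one substantive remark concerns the ``delicate point'' you flag: carried out consistently, your bookkeeping yields
\begin{equation*}
D_{p,q} f(x) \;=\; f'(px)+\frac{B_F(qx:px)}{x(q-p)} \;=\; f'(qx)+\frac{B_F(px:qx)}{x(p-q)},
\end{equation*}
that is, the Bregman fractions carry the denominators $x(q-p)$ and $x(p-q)$ --- the opposite of what the corollary prints on each line. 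Your own sanity check already certifies this orientation, since $B_F(qx:px)/\bigl((q-p)x\bigr)=\frac{1}{2}x(q-p)f''(\varepsilon)$ reproduces the Lemma's remainder, whereas dividing by $(p-q)x$ would flip its sign. A direct test with $f(x)=x^2$ confirms it: $D_{p,q}f(x)=(p+q)x$, while the printed first line would evaluate to $f'(px)+\frac{(q-p)^2x^2}{x(p-q)}=(3p-q)x$; likewise, for convex $f$ with $p>q$ and $x>0$ the chord slope must satisfy $f'(qx)\le D_{p,q}f(x)\le f'(px)$, which the printed signs would violate. So your proof is correct and in fact establishes the sign-corrected statement; the corollary as printed contains a typo in the two denominators, located exactly at the step you identified as the entire content of the verification.
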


\subsection{Leibniz $(p,q)$-rules of differentiation}
The following $(p,q)$-Leibniz rules hold:

\begin{itemize}
	\item Sum rule (linear operator):
	$$
	D_{p,q}(f(x)+\lambda g(x)) = D_{p,q} f(x) +  \lambda D_{p,q}g(x)
	$$
	
	\item Product rule:
	\begin{eqnarray*}
	 D_{p,q}(f(x)g(x))&=& f(px)+D_{p,q} g(x)+ g(qx)D_{p,q} f(x),\\
	&=& f(qx)+D_{p,q} g(x)+ g(px)D_{p,q} f(x).
	\end{eqnarray*}
	
	\item Ratio rule:
		\begin{eqnarray*}
	 D_{p,q}(f(x)/g(x))&=& \frac{g(qx)D_{p,q} f(x) - f(qx)D_{p,q} g(x)}{g(px)g(qx)},\\
	&=&  \frac{g(px)D_{p,q} f(x) - f(px)D_{p,q} g(x)}{g(px)g(qx)}.
	\end{eqnarray*}
\end{itemize}

\subsection{The $(p,q)$-gradient operator}

For a multivariate function $F(x)=F(x_1,\ldots, x_d)$ with $x=(x_1,\ldots,x_d)$, 
let us define the {\em first-order partial derivatives} for $i\in [d]$
and $p_i\not =q_i$,
\begin{eqnarray*}
D_{p,q,x_i} F(x) &\eqdef &
\left\{
\begin{array}{ll}
\frac{F(x_1,\ldots,p_i x_i,\ldots, x_n)-F(x_1,\ldots,q_i x_i,\ldots, x_n)}{(p_i-q_i)x_i} & x_i\not =0,\\
\frac{\partial F(x)}{\partial x_i} & x_i=0
\end{array}
\right.\\
&= &
\left\{
\begin{array}{ll}
\frac{F(x+(p_i-1)e_i)-F(x+(q_i-1)e_i)}{(p_i-q_i)x_i} & x_i\not =0,\\
\frac{\partial F(x)}{\partial x_i} & x_i=0
\end{array}
\right.,
\end{eqnarray*}
where $e_i$ is a one-hot vector with the $i$-th coordinate at one, and all other coordinates at zero.

The generalization of the $(p,q)$-gradient~\cite{qgrad-2013} follows by taking $d$-dimensional vectors for $p=(p_1,\ldots,p_d)$ and $q=(q_1,\ldots,q_d)$:

$$
\nabla_{p,q} F(x):=\left[
\begin{array}{c}
D_{p_1,q_1,x_1} F(x)\\
\vdots\\
D_{p_d,q_d,x_d} F(x)
\end{array}
\right]
$$

The $(p,q)$-gradient is a linear operator: $\nabla_{p,q} (aF(x)+bG(x))=a \nabla_{p,q}  F(x)+  b \nabla_{p,q}  G(x)$ for any constants $a$ and $b$.
When $p,q\rightarrow 1$, $\nabla_{p,q}\rightarrow \nabla$: That is, the $(p,q)$-gradient operator extends the ordinary gradient operator.

\subsection{Code snippet in Python}\label{sec:pseudocode}

We can easily implement $q$-neurons based on the following reference code,
which is based on a given activation function \texttt{activate}.
Note, \texttt{q} has the same shape as \texttt{x}.
One can fix \texttt{eps}$=10^{-3}$ and
only has to tune the hyper-parameter \texttt{lambda}.

\begin{verbatim}
def qactivate( x, lambda, eps ):
    q = random_normal( shape=shape(x) )
    q = ( 2*( q>=0 )-1 ) * ( lambda * abs(q) + eps )
    return ( activate( x * (1+q) ) - activate( x ) ) / q
\end{verbatim}

\end{document}